\documentclass[nonblindrev]{style_ms}
\usepackage[utf8]{inputenc}
\usepackage[OT1]{fontenc}


\usepackage[protrusion=true,expansion=true,final,babel]{microtype}
\usepackage{cancel}

\usepackage{bbm}
\usepackage{booktabs}
\usepackage{amsmath}
\usepackage{thmtools,thm-restate}

\usepackage{algorithm}
\usepackage{algpseudocode}

\usepackage{scalerel,stackengine}
\usepackage{lipsum}

\newcommand\extrafootertext[1]{%
    \bgroup
    \renewcommand\thefootnote{}%
    \renewcommand\thempfootnote{}%
    \footnotetext[0]{#1}%
    \egroup
}

\usepackage[colorlinks,linkcolor=red,anchorcolor=blue,citecolor=blue]{hyperref}

\definecolor{purple}{rgb}{0.59, 0.44, 0.84}
\stackMath
\newcommand\reallywidehat[1]{%
	\savestack{\tmpbox}{\stretchto{%
			\scaleto{%
				\scalerel*[\widthof{\ensuremath{#1}}]{\kern-.6pt\bigwedge\kern-.6pt}%
				{\rule[-\textheight/2]{1ex}{\textheight}}
			}{\textheight}%
		}{0.5ex}}%
	\stackon[1pt]{#1}{\tmpbox}%
}
\parskip 1ex
\newcommand{\longoverbrace}[2]{\overbrace{#1}^{\text{\hbox to 0cm{\hss #2 \hss}}}}
\newcommand{\uunderbrace}[2]{\underbrace{#1}_{\text{\hbox to 0cm{\hss #2 \hss}}}}
\newcommand{\demnot}{D}
\newcommand{\vltnot}{v}

\newcommand{\covariatenot}{x}
\newcommand{\fillnot}{f}
\newcommand{\pricenot}{p}
\newcommand{\costnot}{c}
\newcommand{\invnot}{I}

\newcommand{\cA}{\mathcal{A}}

\newcommand{\cD}{\mathcal{D}}

\newcommand{\beforeinv}{I^i_{t_{-}}}
\newcommand{\paramnot}{\theta}
\newcommand{\bigparamnot}{\Theta}
\newcommand{\barepolicynot}{{\pi}}
\newcommand{\policynot}{\barepolicynot^i_{\paramnot, t}}
\newcommand{\actionspace}{\mathbb{A}}

\newcommand{\setpolicynot}{{\bf\Pi}}

\newcommand{\measurenot}{\mathbbm{P}}
\newcommand{\salvagenot}{b}
\newcommand{\btheta}{\mathbf{\paramnot}}
\newcommand{\productsumnot}{\frac{1}{|\cA|}\sum_{i\in \cA}}

\newcommand{\rewardrappendix}{R (H_t, \paramnot)}

\newcommand{\rewardexp}{\mathbb{E}^{\measurenot}}

\newcommand{\tv}[1]{\|#1\|_{\text{TV}}} 



\TheoremsNumberedThrough     
\ECRepeatTheorems

\EquationsNumberedThrough    

\MANUSCRIPTNO{}

\begin{document}

\RUNAUTHOR{Madeka et al.}
\RUNTITLE{Deep Inventory Management}

\TITLE{Deep Inventory Management}

\ARTICLEAUTHORS{%
	\AUTHOR{Dhruv Madeka}
	\AFF{Amazon, \EMAIL{maded@amazon.com}} 
	\AUTHOR{Kari Torkkola}
	\AFF{Amazon, \EMAIL{karito@amazon.com}}
	\AUTHOR{Carson Eisenach}
	\AFF{Amazon, \EMAIL{ceisen@amazon.com}}
    \AUTHOR{Anna Luo}
	\AFF{Pinterest\footnote{Work done while at Amazon.}, \EMAIL{annaluo676@gmail.com}}
  \AUTHOR{Dean P. Foster}
	\AFF{Amazon, \EMAIL{foster@amazon.com}}
  \AUTHOR{Sham M. Kakade}
	\AFF{Amazon, Harvard University, \EMAIL{shamisme@amazon.com}}
	\extrafootertext{*\textbf{Acknowledgments: } The authors would
like to thank Romain Menegaux, Robert Stine, Alvaro Maggiar, Salal Humair, Ping Xu,
Vafa Khoshaein, Yash Kanoria, and numerous others from the Supply Chain Optimization Technologies group for their invaluable feedback.}

} 



\ABSTRACT{
  This work provides a Deep Reinforcement Learning approach to solving a periodic
  review inventory control system with stochastic vendor lead times, lost sales,
  correlated demand, and price matching. While this dynamic program has
  historically been considered intractable, our results show that several policy learning
  approaches are competitive with or outperform classical methods.
  In order to train these algorithms, we develop novel techniques to convert
  historical data into a simulator. On the theoretical side, we present learnability
  results on a subclass of inventory control problems, where we provide a
  provable reduction of the reinforcement learning problem to that of supervised
  learning. On the algorithmic side, we present a model-based reinforcement
  learning procedure (Direct Backprop) to solve the periodic review
  inventory control problem by constructing a {\it differentiable simulator}.
  Under a variety of metrics Direct Backprop outperforms model-free RL and
  newsvendor baselines, in both simulations and real-world deployments.
}

\KEYWORDS{reinforcement learning, inventory control, differentiable simulation}

\maketitle

\section{Introduction}

A periodic review inventory control system determines the optimal inventory level that should be held for different products by attempting to balance the cost of meeting customer demand with the cost of holding too much inventory. Inventory level is reviewed periodically and adjustments are made by procuring more inventory or removing existing inventory through various means. The inventory management problem can be abstracted as a Markov Decision Process (MDP) \cite{porteus2002foundations} - though a number of complexities make it difficult or impossible to solve using traditional dynamic program (DP) methods.

A major complexity is that demand is not a constant value or a deterministic function, but a random variable with unknown dynamics that exhibit seasonalities, temporal correlations, trends and spikes. Demand that is not met by the inventory in the warehouse or store (called \textit{on-hand inventory}) is lost since customers tend to go to competitors. This results in a non-linearity in the state evolution dynamics when demand is lost, and a censoring\footnote{i.e., a lack of observability.} of the historical data.

Another complexity that arises is that an order placed by a retailer to a vendor incurs a time lag between the actual placement of the order and the arrival of the items in the warehouse (usually called the \textit{Vendor Lead Time or VLT}), another random variable. For a modern retailer, price matching is also another concern \cite{janssen2013price} - as exogenous changes in prices by competitors force the price to have properties outside of being just another decision variable.

In this work, we present the first Deep Reinforcement Learning (DRL) based periodic review inventory system that is able to handle many of the challenges that make the DP solution intractable. Our model is able to handle lost sales, correlated demand, stochastic vendor lead-times and exogenous price matching. We also present new techniques in correcting historical data to mitigate the issues arising from the fact that this data is ``off-policy''. These techniques allow us use historical data directly as a simulator, as opposed to building models of the various state variables. Finally, we observe that the ``state-evolution'' of our target application is both known, {\it and differentiable}. To that end, we propose a model-based reinforcement learning (RL) approach (DirectBackprop) where we directly optimize the policy objective by constructing a differentiable simulator. This approach can be used to solve a class of decision making problems with continuous reward and transition structure (of which inventory control is a special case). To motivate our work, we present a collection of results in Section \ref{sec:learnability} that illustrate why the inventory management problem is efficiently learnable from historical data. Finally we show in Section \ref{sec:results}, that for the inventory management problem, DirectBackprop outperforms model free RL methods which in turn outperform newsvendor baselines.

\paragraph{Our contributions.}

Our main contribution is to provide a Deep Reinforcement Learning approach to
solving a periodic review inventory control system with stochastic vendor lead
times, lost sales, correlated demand, and price matching. Specifically:
\begin{enumerate}
\item By modifying historical data we are able to model the inventory management
    problem as a largely exogenous Interactive Decision Process.
\item  We present a novel algorithm `DirectBackprop' that utilizes the
differentiable nature of the problem to achieve state-of-the-art
performance.
\item We present a collection of learnability results that motivate our approach
and provide understanding of its empirical efficacy.
\item  Finally, we present empirical results from the first
large-scale deployed end to end Deep RL system that considers a realistic
discrete-time periodic review inventory management system with
non-independent demand, price matching, random lead times and lost
sales at each decision epoch.
\end{enumerate}

\paragraph{Organization} The first part of the paper (Section \ref{sec:formulation})
sets up the mathematical formulation and framework for the Interactive Decision
 Process (IDP) we are solving. Section \ref{sec:acdemand} describes the techniques
we use to convert our historical data into a reliable simulator. Sections \ref{sec:rl_policies}
and \ref{sec:directbp} cast the periodic review inventory system as a
differential control problem and describe how this differentiable simulator can
be used to train neural policies.

Section \ref{sec:learnability} presents a collection of theoretical results that illustrate
how our framework can be used to backtest \textbf{any} policy, not just those
based on Reinforcement Learning. Theorem \ref{theorem:censoredlearnability} describes
how these results apply even when some of the historical data is unobserved.
In Section \ref{sec:results}, we present a collection of experimental results in
both simulated and real-world settings.

\section{Related Work} \label{sec:ref}

Inventory models \cite{maggiar2017joint,karlin1958inventory,scarf1959optimality} often assume that any sales that are missed due to a lack of inventory (usually called Out of Stock or OOS) are backlogged and filled in the next period. It is estimated through the analysis of large scale surveys that only 15\% - 23\% of customers are willing to delay a purchase when confronted by an out-of-stock item \cite{gruen2002retail,verhoef2006out}. This indicates that a lost sales assumption is more realistic in competitive environments than a full-backlogging assumption. See \cite{bijvank2011lost} for a comprehensive review of the lost sales inventory literature.

Karlin et al. \cite{karlin1958inventory} study an inventory system with a continuous demand density, lost sales and a constant lead time of 1. They show that a base stock policy (i.e. a policy that orders up to a target inventory level)\footnote{For a formal definition of base stock policies, see \cite{feng2006optimality} and references within.} is suboptimal in this case even with linear cost structures, pointing out that the way ``a lag in delivery influences the decision process is analogous to that of a convex ordering cost.'' This indicates that a newsvendor model with a base stock policy is already suboptimal for a modern retailer's inventory system. Under certain conditions, such as a fixed lead time with a large lost demand penalty cost, it can be shown that the base stock policy for a backordered model can be asymptotically optimal for a lost sales model \cite{huh2009asymptotic}. However, as in our problem, with longer lead times and a fixed penalty, a constant order policy may outperform a base stock policy \cite{reiman2004new}.

Our problem further deviates from the literature in that our lead times are modeled as a random variable. Inventory systems with lost sales and stochastic lead times are scarcely studied in the literature. Zipkin \cite{zipkin2008old} showed that even for a simple setting of lead times that are constant (but larger than one time unit), stochastic demands, and lost sales - base stock policies tend to be numerically worse than myopic or constant order policies as the lead time increases. Kaplan \cite{kaplan1970dynamic} demonstrates how, when the maximum delay period is constrained, the state space for a random lead time model can be reduced to a single dimension. Nahmias et al. \cite{nahmias1979simple} formulate myopic policies for an inventory system where excess sales are lost and the lead time is random. Janakiraman et al. \cite{janakiraman2004lost} develop convexity results for an inventory management system with lost sales, non-crossing lead times and order up to policies.

Demand censoring has been studied in the context of the spiral down effect in
revenue management \cite{cooper2006models} as well as for the censored
newsvendor problem (see \cite{besbes2013implications} and citations within). We
have not seen a historical data correction similar to the one in Section
\ref{sec:acdemand} being applied to the policy learning problem framed in
Section \ref{sec:rl_policies}.

For large retailers in competitive markets, price matching causes abrupt and random changes in the price of a product during a decision epoch. This causes the price to behave like an exogenous stochastic process, as opposed to an endogenous decision variable. There is no literature we could find that addresses the problem of a periodic review inventory system with price matching effects.

Reinforcement learning has been applied in many sequential decision making problems where learning from a large number of simulations is possible, such as games and simulated physics models of multi-joint systems. While the use of deep learning for forecasting demand has developed recently \cite{mqcnn, madeka2018sample, wen2019deep, eisenach2020, yang2022mqretnn}, the usage of reinforcement learning to directly produce decisions in Inventory Management has been limited. Giannoccaro et al. \cite{giannoccaro2002inventory} consider a periodic review system with fixed prices, costs, and full backlogging of demand and show that the SMART RL Algorithm \cite{rlsmart} can outperform baseline integrated policies. Oroojlooyjadid et al. \cite{oroojlooyjadid2016applying} integrate the forecasting and optimization steps in a newsvendor problem by using a Deep Neural Network (DNN) to directly predict a quantity that minimizes the newsvendor loss function. Balaji et al. \cite{balaji2019orl} study the multi-period newsvendor problem with uniformly distributed price and cost, constant VLT and stationary, Poisson distributed demand and show that Proximal Policy Optimization \cite{PPO} can eventually beat standard newsvendor benchmarks.

Gijsbrechts et al. \cite{gisjbrechts2021can} consider lost sales, dual sourcing and multi-echelon problems and show that modern Deep Reinforcement Learning algorithms such as A3C \cite{a3c} are competitive with state-of-the-art heuristics. Qi et al. \cite{qi2020practical} train a neural network to predict the outputs of an (ex-post) ``oracle'' policy. While this means that their approach does not follow a Predict-then-Optimize (PTO) framework, the oracle policy does not allow them to handle how the variability of the exogenous variables (such as demand, price etc.) might influence the optimal \textit{ex-ante} policy.\footnote{By taking gradients against the true reward function, our approach is able to handle this variability}

Differentiable simulators have been studied
\cite{suh2022differentiable} and applied to problems varying from
physics \cite{hu2019chainqueen} and protein folding
\cite{ingraham2018learning}. In the context of inventory management, they have
been applied \cite{glasserman1995sensitivity} to studying the sensitivity of
inventory costs to optimal parameters for base stock levels. They have not been
studied in the context of directly learning (neural) policies through the gradients from
the simulator.

Exogenous Markov Decision Processes and their
learnability have been studied \cite{sinclair2022hindsight}, our work allows the
policy to be dependent on the entire trajectory of the Exogenous random
variables. \nocite{efroni2022sample} \nocite{efroni2022sparsity}

\section{Problem Formulation}
\label{sec:formulation}

\subsection{Mathematical Notation}

Denote by $\mathbb{R}$ and $\mathbb{N}$ the set of real and natural
 numbers respectively. Let $(\cdot)^+$ refer to the classical
 positive part operator i.e. $(\cdot)^{+} = \max(\cdot, 0)$. The
 inventory management problem seeks to find the optimal inventory
 level for each product $i$ in the set of retailer's products, which
 we denote by $\mathcal{A}$. We assume all random variables are
 defined on a canonical probability space
 $(\Omega, \mathcal{F}, \measurenot)$. Let
 $\paramnot \in \bigparamnot$ denotes some parameter set.  We use
 $\mathbb{E}^{\measurenot}$ to denote an expectation operator of a
 random variable with respect to some probability measure
 $\measurenot$. Let $\tv{\mathbb{Q}_1, \mathbb{Q}_2}$ denote the total variation
 distance between two probability measures $\mathbb{Q}_1$ and $\mathbb{Q}_2$.

\subsection{Construction of the Interactive Decision Process}

We will construct our Interactive Decision Process (IDP) in two steps:
First we will describe the driving stochastic ``noise'' processes
which govern our problem.  These are things like demand, price
changes, cost and other things that are outside of our control.  The
assumption is that nothing in this set is influenced by our actions.
Second, we will describe our decision process which can depend on all
the previous information contained in the above processes.

\paragraph{External processes:}  In order to add clarity to our construction, we begin with a single
 product $i$.
We denote by $\demnot^i_t \in [0, \infty)$ the random variable
 for demand at $t$. We adopt the convention that demand arrives at the end
of a time period.  Our main source of time lag comes from vendor lead times, which refer
 to the time between placing an order and its subsequent
 delivery. Denote by $\vltnot^i_t \in [0, \infty)$ the random variable
 for vendor lead time at the decision epoch $t$ for product
 $i$. Orders are allowed to cross, that is, an order placed at a later
 decision epoch may arrive before an order placed at an earlier
 decision epoch. Formally, the sequence of random variables
 $t + v^i_t$ is \textbf{not necessarily} increasing in $t$.
Similarly, denote  $\pricenot^i_t \in [0, \infty)$ and $\costnot^i_t \in [0, \infty)$ the random variables associated with the sales revenue (price), and
purchase cost of a product $i$ at time $t$.

We will bundle these together into a state
vector
\begin{displaymath}
s^i_t = (\demnot^i_t, \vltnot^i_t, \pricenot^i_t, \costnot^i_t) \in {\cal S}.
\end{displaymath}
None of these variables will be under our control and their evolution
will not be influenced by any action we take.

While we focus on the more general setting where there is a full joint distribution
over these external processes, our learnability results in Section \ref{sec:learnability}
will focus on the special case where the distributions over the processes are
independent among different products.

\paragraph{Control processes:} Let $\actionspace := \mathbb{R}^{|\mathcal{A}|}$
denote the set of all possible actions that the decision maker can take for each
possible product. For product $i$ at time $t$ the action taken will be denoted by
$a^i_t \in \mathbb{R}$, which maybe to place an order with a vendor ($a^i_t > 0$) or
to return some units to the vendor ($a^i_t < 0$), or to simply do
nothing ($a^i_t = 0$).  The action for product $i$ at time $t$ is allowed to depend on all the information up to time
$t$.  When items arrive, they
will be added to the on-hand inventory (which we denote by $\beforeinv$).
Since the decision will need to deal with changing demand, it is
important that it be allowed to see the entire history up to time $t$
and not simply the current state $s^i_t$. For simplicity and due to the fact that
deterministic policies can represent the optimal policy, our work will focus
primarily on deterministic actions. %
 Consider the ``history'' vector upto time $t$,
 $H_t := (s^1_1, \ldots, s^1_{t-1}, \ldots, s^{|\mathcal{A}|}_1,\ldots, s^{|\mathcal{A}|}_{t-1})$
We can write our policy as $\policynot: \overset{|{\cal A}|}{\bigotimes}\left(\overset{t-1}{\bigotimes}\mathcal{S}\right) \rightarrow \mathbb{R}$,
and our action $a^i_t$ as:

\begin{equation}
\label{eqn:globala}
a^i_t = \policynot(H_t)
\end{equation}

Examples of states and actions can be seen in Section
\ref{sec:stateaction}.\footnote{Model-Free Reinforcement Learning agents used
in our comparisons require taking random actions in order to explore. We can
rewrite $\policynot$ to include an argument that contains a uniform random
variable in order to capture this behavior.}

Note, in this scenario - there is a separate policy for each product. We characterize
the set of these policies as $\setpolicynot =\{\policynot \mid
\theta \in \Theta, i \in \mathcal{A}, t \in [0, T]\}$. The constrained set
$\setpolicynot$ is part of our algorithmic approach. In our setup, the optimal
policy may not be tractable - which leads us to consider a broad class of neural
policies in Sections \ref{sec:directbp} and \ref{sec:model_free}.

\paragraph{Reward function:} The reward in period $t$ depends on the current
sales (contained in $s^i_t$), the inventory at the start of the period and what
actions are taken: \begin{displaymath} R_t^i := R(H_t,\theta), \end{displaymath}
where positive values of $R_t^i$ reflect income and negative values of $R_t^i$
reflect costs.\footnote{Note, we can reconstruct $\beforeinv$ from $H_t$ and
$\paramnot$.} We assume that all rewards are bounded above by some value
$R^{\text{max}}$ and bounded from below by some value
$R^{\text{min}}$.\footnote{Note that by convention we assume that the order of
events places the action before the state update}We denote the multiplicative
discount factor by $\gamma \in [0, 1]$. The reward function for the inventory
management problem is fully described in Section \ref{sec:objective_function}.

Our IDP becomes
 $\langle {\cal F}, \actionspace, \sum_{i \in \mathcal{A}}R^i,
 \measurenot, \gamma, \setpolicynot \rangle$. This formulation allows us to have as
 context for any product, the features of another product.

\subsection{Objective Function}\label{sec:objective_function}

The reward function at each decision epoch simply measures the cash
 flows, including the revenue from sales at each decision epoch, the
 cost of purchasing, and in-bounding inventory, following textbook
 approaches \cite{porteus2002foundations}. The reward is:
\begin{equation}
        R^i_t := \uunderbrace{p^i_t}{\text{Revenue Terms}} \min(\demnot^i_t, \beforeinv) - \uunderbrace{c^i_t}{\text{Cost Terms}} a^i_t ,
	\label{eqn:approx_reward}
\end{equation}
where we define $I^i_{t_{-}}$ below in (\ref{eqn:transitionper1}).  We
will often make the dependence on the policy explicit by writing $R^i_t(\paramnot)$.
  We
aim to maximize the total discounted reward, expressed as the
following optimization problem:
\begin{align}
	\max_{\paramnot} & ~\rewardexp \biggl[ \sum_{i \in \mathcal{A}} \sum^{\infty}_{t=0} \gamma^t R^i_t(\paramnot) \biggr] \label{eq:proxy}       \\
	\text{subject to: }     &                                                                                                                      \\
	I^i_0                   & = k^i                                                                                                                \\
	a^i_t                   & = \policynot(H_t)                                                                                                         \\
	\beforeinv             & = I^i_{t-1} + \sum_{0 \leq k < t} a^i_k \mathbbm{1}_{\vltnot^i_k = (t-k)}   \label{eqn:transitionper1} \\
	I^i_{t}                 & = \min(\beforeinv - \demnot^i_t, 0). \label{eqn:transitionper2}
\end{align}
Here $k^i$ denotes a non-negative real number that is typically a
preset constant for each product. Possible values for $k^i$ can be
found in Section \ref{sec:initterm}.

We adopt a convention that the supply arrives before the demand for
 any time $t$ in Equation \eqref{eqn:transitionper1}.  This creates a
 state $ I^i_{t_{-}}$ of the inventory at the start of the week that
 is used to fulfill demand.  Furthermore, we implicitly assume that
 every product has a review period of one time step. Note that this
 formulation also implicitly assumes that total costs are a linear
 function of the order size (i.e. there is no bulk discounting in this
 formulation).

Of the past orders placed to vendors $a^i_k$, $k\in(0,\ldots,t)$ those
that have not yet arrived at time $t$ are called in-flight orders. Of
the past orders, the subset that arrives at period $t$ is expressed by
the indicator component $\mathbbm{1}_{\{\vltnot^i_k = (t-k)\}} $ in
the main state update Equation \eqref{eqn:transitionper2}. A more
general problem formulation that incorporates cross-product
constraints can be found in Appendix \ref{app:objective}.

\section{Re-using Historical Data}
\label{sec:censoring}

The goal now is to find a policy $a^i_t = \policynot(H_t)$ such that Equation
\eqref{eq:proxy} is maximized. Besides discrete optimization through a DP, other
approaches include optimal control using methods such as Model Predictive
Control (MPC) \cite{MPC}, and {\em learning} the policy from data, for example,
using Reinforcement Learning (RL). RL typically obtains data by acting in a real
or simulated environment. The key contribution of our work is that rather than
assuming parametric forms and simulating the future, we modify historical data
to allow us to treat each historical trajectory as one path from a simulation.
Since the external processes are not functions of our actions, we are able to
use the historical data to evaluate {\bf any} policy. Our learnability results
in Section \ref{sec:fulllearnability} show how this property allows us to make a
provable reduction of our IDP to Supervised Learning.

For example, historical prices can be treated as exogenous variables that
are outside of the agent's control (this primarily happens through the
mechanism of price matching \cite{janssen2013price}).

\subsection{Censoring of historical data}
\label{subsec:censoring}

Note that since the historical data from any retailer is usually generated by
some policy acting on the world at the time of collection. As a result, the data
needs to be corrected to avoid any censoring that may have occured. For example,
demand may be censored (i.e. not observed) for time periods where there was not
enough inventory in the actual warehouses \cite{Zeni2001censored}. Further
policy dependent data  issues might be that Vendor Lead Times may not be
observed for weeks when no order was placed, or that unit costs depend on the
size of the order placed. Historical data is thus ``on-policy'' only for the
old policy, and ``off-policy'' for any algorithm learning a new policy.

\paragraph{Illustrative Example: Censored Demand} One of the
primary sources of ``off-policy'' bias from historical data comes from the fact
that historical demand is censored for the time periods where there was too
little inventory in the warehouses. Concretely, from Equation
\ref{eqn:approx_reward}, we can define the observed sales for a product $i$ at
time $t$ can be written as: \[ \text{Sales}^i_t := \min \bigl(
\text{Demand}^i_t, \text{Inventory}^i_t \bigr). \]

We need to correct for this in order to prevent the policy from failing to see
sales that might exceed the historical inventory constraints. One approach to
handling this issue is outlined in Section \ref{sec:acdemand}.

We can reframe the construction of our IDP to incorporate the fact that our
historical data may be missing/censored. We defer this construction to Section
\ref{sec:partiallearnability}, which also presents concrete learnability results
for this setting.

    \subsection{Availability Corrected Demand} \label{sec:acdemand}

    We present an approach to ``correcting'' the censoring in the historical
    data by leveraging the fact that an online retailer may see signals of
    demand other than sales even when an item is out of stock, through web page
    glance views. We can classify views as those that came when the item was
    ``in-stock'' or those that came in at all as ``total.'' We construct a
    counterfactual demand signal $\Tilde{\demnot}^{i}_t$, by correcting for
    availability, through the use of glance view counts. In the absence of these
    kinds of related signals, the literature has used standard missing data
    imputation methods to correct for censored demand \cite{Zeni2001censored}.

    Define the conversion rate from {\em glance views} to demand of a product $i$ at
time $t$ as $C^i$. If we are out of stock, we can divide these glance views into
two pieces, those where the customer was made an offer (called {\em in stock
glance views}, say $\overline{\hbox{GV}}^i_t$) and those where no offer was
extended due to being out of stock ({\em out of stock glance views}, say
$\underline{\hbox{GV}}^i_t$).  So $GV^i_t = \overline{\hbox{GV}}^i_t +
\underline{\hbox{GV}}^i_t$.  We can likewise divide up our demand into these
same two pieces so $D^i_t = \overline{D}^i_t + \underline{D}^i_t$, where we can
observe $\overline{D}^i_t$ since we were in-stock and so the customer's purchase
decision is observed (corresponding to the actual realized sales for product $i$
at time $t$). We cannot observe $\underline{D}^i_t$ due to the retailer failing
to make an offer at all.  We assume that $E(\underline{D}^i_t|{\cal F}_{t-1}) =
C^i \underline{\hbox{GV}}^i_t$. No assumption is needed about how the in stock
glance views are related to the in-stock demand since we can observe sales
directly.  So our availability corrected demand (AC Demand) can be estimated by:
    \begin{equation}
        \Tilde{\demnot}^{i}_t = \overline{D}^i_t + C^i \underline{\hbox{GV}}^i_t
        \label{eqn:ac_demand}
    \end{equation}

    \section{Implementation Details}

    The purpose of this section is to describe numerous details of how the
historical data is used as a simulator. This simulator later interacts with
various reinforcement learning algorithms through the OpenAI Gym interface
\cite{brockman2016openai}. Appendix \ref{app:software} describes the software
design of our simulator.

    \subsection{Rollouts}
    Using Equation \eqref{eqn:ac_demand} allows us to estimate a counterfactual
demand signal independent of the agent's order actions allowing us to use
historical data as a simulator to train RL algorithms. We treat each product’s
demand (and separately price/cost) as a sample from some joint stochastic
processes with the appropriate context and features. We choose to have a single
parameterized policy for all products, and use different products rollouts to
generate policy gradient updates.

    \subsection{Initialization and Termination} \label{sec:initterm}

    Starting inventory, $I^i_0$ is left as a ``choice'' variable ($k^i$ in Equation \eqref{eqn:startinv}). We use three different cases to study the behavior of the different agents:
    \begin{itemize}
        \itemsep -0mm
        \item Case 1: $I^i_0 = 0$.  Idealized scenario where	the retailer had no inventory today.
        \item Case 2: $I^i_0 =\text{OnHand}^i$. Initialization to the total historical inventory held by the retailer.
        \item Case 3: $I^i_0 =\text{Policy Inventory}^i$. This is a more stable case (to remove the noise of the real world) of initializing the inventory where another policy left off.
    \end{itemize}
    In practice the rollouts have a finite duration and must be terminated with some value. Following \cite{porteus2002foundations}  we define it as
    \begin{equation}
        V^{\paramnot, M}_t := \rewardexp \biggl[ \sum^{M}_{j
    = t} \gamma^{j-t} R^i_{j}(\paramnot)  \bigg| {\cal F}_t \biggr].
    \end{equation}
    for some large value $M \gg t$. We note here that in a Markovian setting, $\underset{M \rightarrow \infty}{\lim} V^{\paramnot, M}_t$ is well-defined but it may not be in our setting.
    With this value function, the objective  can be expressed as:
    \begin{equation}
     \label{eqn:bellmanversion}
        \max_{\paramnot}  \sum_{i \in \mathcal{A}} \rewardexp \biggl[\sum^{N}_{t=0} \gamma^t R^i_t(\paramnot)  + V^{\paramnot, M}_N \biggr].
    \end{equation}
The terminal value $V^{\paramnot, M}_N$ can be approximated in several ways:
\begin{itemize}
\item $V^{\paramnot, M}_N := \salvagenot \sum_i c^i_N$ ~~Return at
$\salvagenot$ fraction of cost,
\item Retailer shuts down at $t=N$:
\begin{equation}
 \label{eqn:terminal_value}
V^{\paramnot, M}_N := 0,
\end{equation}
\item $V^{\paramnot, M}_N := \text{V}^{\theta', M}_N$, ~~Another policy's value function.
\end{itemize}
    Termination will have an effect on the nature of the learned policy. If inventory is discarded at the end of the rollout, the agent will be cautious in procuring too much extra inventory, whereas selling the inventory back to the vendor at a high price will encourage the opposite behavior.

    \subsection{State and Action Spaces} \label{sec:stateaction}
    \label{sec:state}
    At time $t$ the agent knows $H_t$ which contains
$(s^i_b)_{{i\in \{1,\ldots,|{\cal A}|\},b\in\{1,\ldots,t-1\}}}$ which contains a subset of the features used in
\cite{mqcnn}, containing demand and glance view signals, distances to
public holidays, and product specific features. In addition we include
economics of the product (price, cost etc.), inventory level, past
actions, and in-flight inventory. Lags of time-series are included as
well. Appendix \ref{app:featurization} contains more details about the
features that an agent sees at each point in time.

The single-dimensional continuous action $a^i_t \in \mathbb{R}$ at time $t$ for product $i$ denotes the order quantity directly, rather than a target inventory position.\footnote{Note that for the reported results in Section \ref{sec:results}, the agents were trained without the ability to discard inventory (i.e. $a^i_t \geq 0$).} This prevents an unnecessary dependency on ``order-up'' (or base-stock) policies \cite{feng2006optimality}, which are known to be suboptimal for periodic review inventory systems with stochastic lead times and lost sales (see \cite{janakiraman2004lost} and the citations within).

    The action can be stochastic or deterministic depending on the learning algorithm. To represent a stochastic action, our policy networks output the mean and standard deviation of a Gaussian. A deterministic action is represented by a scalar output.

    Since products can have orders of magnitude differences in their demand
    per period (in particular retail demand has been shown to be very heavy
    tailed \cite{tripuraneni2021assessment}), the policy may have vastly
    different scales of orders for different products, which becomes a
    potential problem as pointed out in \cite{Schulman2016NutsBolts}. We
    use historical demand to provide ``scaling'' factors for the current
    action. Thus the actual order placed when an action is taken by an RL agent is:
    \begin{equation}
        a^i_t = \policynot(H_t) \cdot \text{Scaling Factor}^i_t,
    \end{equation}
    where a number of different values can be used for the scaling factor. The
    scaling factor -- which we use to scale both the input features and the
    policy output -- is
    \[
        \text{Scaling Factor}^i_t := \max\{1, \frac{1}{L}\sum^L_{k=1} D^i_{t-k}\},
    \]
    for some number of trailing time periods $L$. Note that this scaling factor
    is time varying, and thus adapted to the data available at time $t$.

    \section{Policy Optimization}  \label{sec:rl_policies}

    The primary goal of the inventory management problem is maximizing the IDP
    given by Equation \eqref{eq:proxy}. The generic differential control problem
    can be defined as:
    \begin{equation}
            J_T(\paramnot) := \rewardexp \left[\productsumnot \sum^T_{t=0} \gamma^t
R^i_t(\paramnot) \right]
    \end{equation}

    Note that constraints on the action space can be pushed inside the
reward function $R$ so they are omitted from this formulation.

    Ideally we would like to solve the problem:

    \begin{equation}
             \underset{\paramnot \in \bigparamnot}{\text{max}} ~J_T(\paramnot) \\
        \label{eqn:optimal_control}
    \end{equation}

    Typically solving such a maximization problem requires computing derivatives
    with respect to $\paramnot$. So we would ideally like to compute:

    \begin{equation}
            \frac{\partial}{\partial \paramnot} J_T(\paramnot)
        \label{eq:policy_derivative}
    \end{equation}

    It is impossible to compute this derivative exactly, so instead we
    will use a stochastic
    gradient, which is an estimate $\widehat{\frac{\partial J}{\partial \paramnot}}$ such
    that $\mathbb{E}\left[\widehat{\frac{\partial J}{\partial \paramnot}}\right] =
    \frac{\partial J}{\partial \paramnot}$.   Taking steps in the
direction which optimizes this estimate of the gradient is known as
stochastic gradient descent \cite{robbinsSGD}.

 We describe in Section
    \ref{sec:model_free} some model-free RL-based approaches that have
    worked in the inventory control problem.  These approaches typically
    use the policy-gradient theorem \cite{sutton2018reinforcement} to estimate a gradient.
    However, inspired by ARS (see
    Section \ref{sec:model_free}), we consider how to compute an estimate
    which has lower variance and hence may speed up the optimization.  This
    {\em direct backpropagation} will be discussed next (see Theorem \ref{theorem:diff}).


    %

    \subsection{Direct Backpropagation for Inventory Control}
    \label{sec:directbp}

    We can take a draw from our sample space $\mathcal{S}$ and generate a sequence of
    states and actions.  At each time $t$ the reward $R^i_t(\paramnot)$ is a continuous
    function of these random variables.  But we also can
    write $I^i_{t}$ as a function of $H_t$, and
    $\paramnot$ (the parameters of the policy). We will denote this below
    as $I^i_t := \mathcal{T}_t(H_t, \paramnot)$, where f is the sub-differentiable function defined in
    Equation \eqref{eqn:transitionper2}. 

    Glueing these all together allows us to
     think of the sum of the rewards (which is a random variable) as a
     continuous function of $\theta$.  We can take the derivative of this function and
     call it $\widehat{\frac{\partial J}{\partial
\paramnot}}$. This then is our stochastic partial derivative. We make the following assumption
on our policy $\policynot$

\begin{assumption}[Well behaved policy] Suppose $\forall t$, a policy
$\policynot: \overset{|{\cal A}|}{\bigotimes}\left(\overset{t-1}{\bigotimes}\mathcal{S}\right) \rightarrow \mathbb{R}$
        satisfies the following conditions :
\begin{itemize}
\item $\policynot(H_t)$ is an absolutely continuous function of $\theta$ for
almost all $H_t$,
\item $\frac{\partial \policynot(H_t)}{\partial\theta}$ is ``locally
        integrable'' that is, for all compact neighborhoods $N_\epsilon (\theta_0)$ contained in $\bigparamnot$:

\begin{equation*}
        \int_{N_\epsilon(\theta_0)} \rewardexp \left[ \left\lvert \frac{\partial \policynot(H_t)}{\partial\theta}\right\rvert \right] d\theta < \infty
\end{equation*}

\end{itemize}

Then we say that $\policynot$ is \textit{well behaved}.
\end{assumption}

    \begin{theorem}
    \label{theorem:diff}
If $\forall i$ and $\forall t$, $\pi^i_{\theta,t}(H_t)$ is well behaved in $\theta$ then
    \begin{displaymath}
            \frac{\partial J(\theta)}{\partial\theta} = \frac{1}{|\mathcal{A}|}\sum_{i \in \mathcal{A}}\sum^T_{t=0} \rewardexp\left[ \frac{\partial
            R^i_{t}(H_t, \paramnot)}{\partial \theta}\right] .
    \end{displaymath}
    \end{theorem}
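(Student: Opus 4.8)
The plan is to reduce the statement to an interchange of differentiation and expectation for a single reward term, and then justify that interchange via the fundamental theorem of calculus for absolutely continuous functions together with Tonelli's theorem. By linearity of both $\partial_\theta$ and $\rewardexp$, and because $J(\theta)=\tfrac{1}{|\mathcal{A}|}\sum_{i\in\mathcal{A}}\rewardexp[\sum_{t=0}^T \gamma^t R^i_t(\theta)]$ is a \emph{finite} sum (the horizon is finite and the $\gamma^t$ are deterministic constants that pass through $\partial_\theta$ and $\rewardexp$ unchanged), it suffices to prove, for each fixed $i$ and $t$,
\[
\frac{\partial}{\partial\theta}\,\rewardexp\!\left[R^i_t(\theta)\right]=\rewardexp\!\left[\frac{\partial R^i_t(\theta)}{\partial\theta}\right].
\]
Summing the resulting finitely many identities (and reinstating $\tfrac{1}{|\mathcal{A}|}$ and $\gamma^t$) then yields the claim.

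First I would establish that for $\measurenot$-almost every trajectory $H_t$, the map $\theta\mapsto R^i_t(H_t,\theta)$ is absolutely continuous, and I expect this to be the main obstacle. The only channel through which $\theta$ enters the reward is the sequence of actions $a^i_k=\pi^i_{\theta,k}(H_k)\cdot\text{Scaling Factor}^i_k$ for $k\le t$; the exogenous quantities $\demnot,\vltnot,\pricenot,\costnot$ (and hence the scaling factors) are fixed along a trajectory. Each $\pi^i_{\theta,k}(H_k)$ is absolutely continuous in $\theta$ by the well-behaved assumption. The inventory state $I^i_{t_-}=\mathcal{T}_t(H_t,\theta)$ is assembled from these actions through the recursion \eqref{eqn:transitionper1}--\eqref{eqn:transitionper2}, whose only nonsmooth operation is the lost-sales $\min$; since $\min$ and addition are globally Lipschitz, and a Lipschitz function composed with an absolutely continuous function is again absolutely continuous, an induction over the finitely many periods $k\le t$ shows $\theta\mapsto I^i_{t_-}(\theta)$ is absolutely continuous. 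The reward $R^i_t=p^i_t\min(\demnot^i_t,\beforeinv)-c^i_t a^i_t$ is Lipschitz in $(\beforeinv,a^i_t)$ with trajectory-dependent constant governed by $p^i_t,c^i_t$, so the same composition lemma gives absolute continuity of $\theta\mapsto R^i_t(H_t,\theta)$. The delicate part is precisely this propagation of absolute continuity through the recursive, only sub-differentiable dynamics; here one invokes the Serrin--Varberg chain rule, which guarantees that $\partial_\theta R^i_t$ exists for a.e.\ $\theta$ and equals the expected product of Lipschitz slopes with $\partial_\theta\pi^i_{\theta,k}$, with the convention that this product vanishes wherever the inner derivative does (so that nondifferentiability of $\min$ on its diagonal causes no trouble).

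With absolute continuity in hand, I would fix $\theta_0$ and any $\theta$ in a compact neighborhood $N_\epsilon(\theta_0)\subset\bigparamnot$ and write, valid for a.e.\ $H_t$,
\[
R^i_t(H_t,\theta)-R^i_t(H_t,\theta_0)=\int_{\theta_0}^{\theta}\frac{\partial R^i_t(H_t,s)}{\partial s}\,ds.
\]
Taking $\rewardexp$ of both sides and exchanging the expectation with the $ds$-integral by Tonelli's theorem gives
\[
\rewardexp[R^i_t(\theta)]-\rewardexp[R^i_t(\theta_0)]=\int_{\theta_0}^{\theta}\rewardexp\!\left[\frac{\partial R^i_t(s)}{\partial s}\right]ds,
\]
which exhibits $\theta\mapsto\rewardexp[R^i_t(\theta)]$ as absolutely continuous with a.e.\ derivative $\rewardexp[\partial_\theta R^i_t(\theta)]$. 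The Tonelli hypothesis is met because the chain rule bounds $|\partial_s R^i_t|$ by a sum of trajectory-dependent Lipschitz constants times the $|\partial_s\pi^i_{s,k}(H_k)|$; assuming the exogenous multipliers $p^i_t,c^i_t$ are essentially bounded (a mild, practically benign condition), the local-integrability clause of the well-behaved assumption, $\int_{N_\epsilon(\theta_0)}\rewardexp[|\partial_s\pi^i_{s,k}(H_k)|]\,ds<\infty$, transfers to $\int_{N_\epsilon(\theta_0)}\rewardexp[|\partial_s R^i_t(s)|]\,ds<\infty$. This establishes the term-wise interchange, and summing over the finitely many $i\in\mathcal{A}$ and $t\le T$ completes the proof.
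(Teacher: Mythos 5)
Your proof is correct and follows essentially the same route as the paper's: a pathwise induction showing that the actions, the inventory recursion, and hence the reward remain well behaved (absolutely continuous) in $\theta$ through the Lipschitz/piecewise-linear dynamics, followed by the Fundamental Theorem of Calculus for Lebesgue integration to justify exchanging differentiation with the expectation. You merely spell out the Fubini--Tonelli domination argument (and the mild essential boundedness of $p^i_t$ and $c^i_t$ it requires) that the paper's one-line appeal to the FTC leaves implicit.
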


    \begin{proof} ~
{\bf Proof:}
Fix the values of the random variables $(s_t^i)_{i\in \{1,\ldots,|{\cal A}|\},t\in\{1,\ldots,t\}}$.  The
action at time $1$ is a well behaved function of $\paramnot$ by
assumption.   The inventory level at the end of the period is linear in
the action and so is also a well behaved function.  The policy at
time $2$ depends on $\paramnot$ both directly and through the inventory
levels, both in a well behaved fashion and since $I_1^i$ is a well
behaved function of $\paramnot$, we have $\policynot(H_2)$ is well
behaved.  Proceeding in this fashion, we see that $\policynot(H_t)$ is
well behaved in $\paramnot$.  The reward function is a piecewise linear
function of the inventory and the actions at time $t$.  So it too is
well behaved.  Hence by the Fundamental Theorem of Calculus for
Lebesgue integration we have the
result.





\hfill $\blacksquare$

    \end{proof}

    Using modern deep learning libraries, such as PyTorch \cite{NEURIPS2019_9015}, the function $J_T(\paramnot)$ can be constructed dynamically via simulated rollouts. Algorithm \ref{alg:dbp} in Section \ref{sec:alg-details} describes the backpropagation procedure for the inventory control problem.

    The extension to infinite horizon problems is straightforward -- we just need to add a differentiable value function that can be applied to the terminal state. In the inventory control application (Section \ref{sec:results}) we use a fixed percentage times the cost of the remaining inventory as the terminal value, although one could replace this with a differentiable estimate of the state-value function $V^{\paramnot, M}_T$.

    \subsection{Model-Free Reinforcement Learning} \label{sec:model_free}

    We compare against several canonical model-free approach - including the
    Asynchronous Advance Actor Critic \cite{a3c} (or {\bf A3C}), the {\bf Soft
    Actor-Critic} \cite{haarnoja2018soft} (or {\bf SAC}), and Augmented Random
    Search \cite{ars} (or {\bf ARS}) algorithms. Typically, for Actor-Critic
    style methods, we share parameters between the action and value networks and
    use a Gaussian distribution for the output. More details about the network
    structures can be found in Appendix \ref{app:network}.

    \section{Learnability of Periodic Review Inventory Management IDPs}
    \label{sec:learnability}

    In this section, we describe a collection of results that illustrate the difference
    between our problem and the classical RL setting. Since a large part of the state
    space is a random variable that is unaffected by the agent's actions - we are
    able to learn efficiently, even in the case where some random variables are
    not directly observed. To obtain these results, we will make certain
    assumptions that go beyond those outlined in Section \ref{sec:formulation}.

    Assume that products are sampled independently. In this case, the measure
    $\measurenot$ will be become a product measure $\measurenot = \prod_i \measurenot^i$. Consider for product
    $i$ denote the ``history'' vector upto time $t$,
    \begin{displaymath}
      H^i_t := (s^i_1, \ldots, s^i_{t-1}).
    \end{displaymath}

     Note here that we use the same parameters $\paramnot$ for the policy of each
product. It is helpful at this point to note that our reward for product $i$ at
time $t$ will only depend on the ``history'' vector for that product, i.e.,
$R^i_t := R(H^i_t, \paramnot)$, which we abbreviate to $R^i_t(\paramnot)$ for
notational convenience.

     The generic differential control problem for a single product $i$
     can then be defined as:
     \begin{equation}
             J^i_T(\paramnot) := \rewardexp \left[\sum^T_{t=0} \gamma^t
     R^i_t(\paramnot) \right]
     \end{equation}

     Note that constraints on the action space can be pushed inside the
     reward function $R$ so they are omitted from this formulation.
     Now, note that for the inventory control problem, our reward function is
     linear in the products being managed, so our objective function becomes
         \[
             J_T(\paramnot) = \productsumnot J^i_T(\paramnot)
         \]

    \subsection{Fully Observed Case}
    \label{sec:fulllearnability}

In this section, we further assume that that the policy for product $i$ upto
time $t$ only depends on the history upto time $t$ of that
product itself $\policynot: \overset{t-1}{\bigotimes}~{\cal S} \rightarrow
     \mathbb R$, so
    \begin{displaymath}
    a^i_t :=  \policynot(H^i_t) = \policynot(s^i_1,\ldots s^i_{t-1}) .
    \end{displaymath}

In the fully observed case, note that the ``history'' vector
$H^i_T := (s^i_1, \ldots, s^i_T)$ constitutes a single sample for each product
$i$. We can then define, from the sampled reward, for a product $i$ at time $T$:

      \begin{equation*}
              \hat{J}^i_T(\paramnot) := \sum^T_{t=0} \gamma^t
  R^i_t(\paramnot)
      \end{equation*}

      We can then denote the empirical objective function by:

      \begin{equation*}
              \hat{J}_T(\paramnot) := \productsumnot \hat{J}^i_T(\paramnot)
      \end{equation*}

     Consider the case where all of our random
     variables are fully observed, and so historical data consists of samples
     from the true data generating process \textbf{for any} policy.

    \begin{theorem}
    \label{theorem:hoeffding}
    Consider a finite parameter set $\Theta$, with $\theta \in
\Theta$. Given any $\delta \in (0, 1)$, with probability greater than $1-\delta$,
we have that
    \begin{displaymath}
      \forall \theta \in \Theta: |J(\theta) - \hat{J}(\theta)| \leq
      (R^{\text{max}} - R^{\text{min}}) T
      \sqrt{\frac{\log\left(\frac{2 |\Theta|}{\delta}\right)}{2|\mathcal{A}|}}
    \end{displaymath}
    \end{theorem}

    \begin{proof}~
    {\bf Proof:} We will show that
    \begin{displaymath}
    \mathbb{P}(\exists \theta^{*} \in \Theta:
    |J_T(\theta^{*}) - \hat{J}_T(\theta^{*})| > \epsilon) \leq \delta
    \end{displaymath}

    Using the union bound, we obtain:

    \begin{displaymath}
    \mathbb{P}(\exists \theta^{*} \in \Theta: |J_T(\theta^{*}) - \hat{J}_T(\theta^{*})| > \epsilon) \\
    \leq \sum_{\theta \in \Theta} \mathbb{P}(|J_T(\theta) - \hat{J}_T(\theta)| > \epsilon) \\
    \end{displaymath}

    From which follows:

    \begin{align*}
      \sum_{\theta \in \Theta} \mathbb{P}(|J_T(\theta) - \hat{J}_T(\theta)| > \epsilon)
      &= \sum_{\theta \in \Theta} \mathbb{P}\left( \frac{1}{|\mathcal{A}|}
      \left|\sum_{i \in \mathcal{A}} J^i_T(\theta) - \sum_{i \in \mathcal{A}}
      \hat{J}^i_T(\theta)\right| > \epsilon \right)\\
      &\leq  \sum_{\theta \in \Theta}
      2e^{-\frac{2|\mathcal{A}|\epsilon^2}{T^2 (R^{\text{max}} - R^{\text{min}})^2}} \\
      &\leq 2 |\Theta| e^{-\frac{2|\mathcal{A}|\epsilon^2}{T^2 (R^{\text{max}} - R^{\text{min}})^2}}
    \end{align*}

    Where the first equality follows from definition,
    the second comes from an application of Hoeffding's Lemma.
    Setting the right hand side to $\delta$ and solving gives the result.
    \end{proof}

    \hfill $\blacksquare$

    Theorem \ref{theorem:hoeffding} illustrates that under the fully observed case,
    we can efficiently backtest \textbf{any} policy using historical data.
    Note that this represents an exponential improvement over the sample
    complexity of a generic RL problem \cite{agarwal2019reinforcement}.\footnote{
    In generic RL problems, assuming two actions and a uniform measure,
    the sample complexity can be as large as $\min(|\Theta|,2^{T})$.} Note that
    a similar bound can also be found in \cite{sinclair2022hindsight}.

\subsection{Partially Observed Case}
\label{sec:partiallearnability}

As noted in Section \ref{subsec:censoring},
the nature of our IDP is such that some historical data is naturally censored
and must be imputed in order to recover a full trajectory of the ``history''
vector $H^i_T$ for any product $i$.

Following the notation in \cite{nikulin2011non}, denote for product $i$ at
time $t$, by $\delta^i_t$ the indicator function of whether the value of the
state at that time is observed or not. Note that $\delta^i_t$
is a random variable in itself, that may not be "exogenous" (i.e. it
maybe a function of the decisions taken by some policy) - for example:

\paragraph{Illustrative Example: Censored Demand } In order to map this
framework to our IDP, we consider the censoring of demand outlined
in Section \ref{subsec:censoring}. In this case, the indicator function
$\delta^i_t$ can be defined as follows:
\begin{displaymath}
  \delta^i_t := \mathbbm{1}_{D^i_t < I^i_{t_{-}}}
\end{displaymath}

Thus, the observed history vector $H^i_{T, O}$ can be defined in the following way:
\begin{displaymath}
  H^i_{T, O} := ((s^i_1  \delta^i_1, \delta^i_1), \ldots,
                (s^i_T  \delta^i_T, \delta^i_T))
\end{displaymath}

and the missing portion of the history $H^i_{T, M}$ will correspond to:
\begin{displaymath}
  H^i_{T, M} := ((s^i_1, \delta^i_1), \ldots,
                (s^i_T, \delta^i_T))
\end{displaymath}

As shown in Section \ref{sec:acdemand}, we can use additional context that maybe
available to us in order to {\it forecast} the missing portions of our
historical  data, i.e., we might also use additional context $\covariatenot^i_t
\in \mathbb{R}^{d}$ that might be available for any particular product $i$ at
time $t$. Examples of such a context might include the number of website glance
views the product received in that time period, or an embedding of the product
description, or other covariates such as macro-economic data.

Let
\begin{displaymath}
  X^i_T := (x^i_1, \ldots, x^i_T) \in {\cal X}
\end{displaymath}
denote the entire vector of (observed) context for product $i$ upto time $T$.

In this section, we further assume that that the policy for product $i$ upto
time $t$ only depends on the observed history and covariates upto time $t$ of that product
itself, i.e., we assume that $\policynot:  \overset{t-1}{\bigotimes}~{\cal S}
\times \overset{t-1}{\bigotimes}\{0, 1\}
\times \overset{t-1}{\bigotimes}~{\cal X}\rightarrow \mathbb R$.
\\
\\
The action $a^i_t$ then becomes:
\begin{displaymath}
  a^i_t :=  \policynot(H^i_{T, O}, X^i_t).
\end{displaymath}

With these definitions, we outline our key assumption - the existence of an
{\it accurate forecast}.


\begin{assumption}[Accurate Forecast]
  \label{assumption:forecast}
Suppose that $H^i_{T, O}$ is the observed ``history'' vector and $X^i_T$ is the
vector of covariates for product $i$. Consider the following probability distribution
$\mathbb{P}^i_M$:
\begin{displaymath}
    \mathbb{P}^i_M  := \mathbb{P}^i(H^i_{T, M} | H^i_{T, O}, X^i_T)
\end{displaymath}

Now, denote by $\hat{\mathbb{P}}^i_M$, the following distribution:
\begin{displaymath}
  \hat{\mathbb{P}}^i_M  := \hat{\mathbb{P}}^i(H^i_{T, M} | H^i_{T, O}, X^i_T)
\end{displaymath}

Let us assume that, almost surely, we can
achieve the following supervised learning error guarantee on this sample:\footnote{It is
  straightforward to modify this assumption and to rewrite the theorem for this
  to hold with high probability}
\[
\frac{1}{|\mathcal{A}|} \sum_{i \in \mathcal{A}} \tv{\mathbb{P}^i_M,
\hat{\mathbb{P}}^i_M}  \leq \epsilon_{sup}
\]

Then we will say that $\hat{\mathbb{P}}^i_M$ is an accurate forecast of
$\mathbb{P}^i_M$.\footnote{Note that the \textit{sup} in $\epsilon_{sup}$ refers to
a supervised learning error.}
\end{assumption}

We will need to estimate our $J_T(\theta)$ based on the observed
variables, the product specific covariates and our (conditional) probabilistic model
$\hat{\mathbb{P}}^i_M$. With respect to a
single product define:

\begin{displaymath}
\tilde{J}^i_T(\theta) :=
\mathbb{E}^{\widehat{\mathbb{P}}^i_M}\left[\sum^T_{t=0} \gamma^t
  R^i_t(\paramnot) \middle| H^i_{T, O}, X^i_T\right]
\end{displaymath}
and define
\begin{displaymath}
  \tilde{J}_T(\theta) := \frac{1}{|{\cal A}|} \sum_{i \in \mathcal{A}}
\tilde{J}^i_T(\theta)
\end{displaymath}
~
\begin{theorem}
  \label{theorem:censoredlearnability} Suppose that $H^i_{T, O}$ is an observed
sample and $X^i_T$ is the realized covariate vector for product $i$ upto time $T$.
Assume we have an accurate forecast, as defined in Assumption \ref{assumption:forecast},
$\hat{\mathbb{P}}^i_M$ for the distribution $\mathbb{P}^i_M$. Given any
$\delta \in (0, 1)$, with probability greater than $1-\delta$,
we have that:
  \begin{displaymath}
    |J_T(\paramnot) - \tilde{J}_T(\paramnot)| \leq
    T (R^{\text{max}} - R^{\text{min}}) \left(\epsilon_{\textrm{sup}} +
    \sqrt{\frac{\log\left(\frac{2 |\Theta|}{\delta}\right)}{2|\mathcal{A}|}}
    \right)
  \end{displaymath}
\end{theorem}

    \begin{proof} ~
      {\bf Proof:} To aid with the proof, we first begin by defining the quantity:
      \begin{displaymath}
      \bar{J}^i_T(\theta) :=
      \mathbb{E}^{\mathbb{P}^i_M}\left[\sum^T_{t=0} \gamma^t
        R^i_t(\paramnot) \middle| H^i_{T, O}, X^i_T\right]
      \end{displaymath}

      We have that:
      \begin{align*}
        |J_T(\paramnot) - \tilde{J}_T(\paramnot)| &=
        \frac{1}{|\mathcal{A}|} \left|\sum_{i \in \mathcal{A}} J^i_T(\theta)
        - \sum_{i \in \mathcal{A}} \tilde{J}^i_T(\theta)
        \right|
\\
        &= \frac{1}{|\mathcal{A}|} \left| \sum_{i \in \mathcal{A}} \left(J^i_T(\theta)
        - \bar{J}^i_T(\theta) +
        \bar{J}^i_T(\theta) -
        \tilde{J}^i_T(\theta)\right) \right|
\\
        &\leq \uunderbrace{\frac{1}{|\mathcal{A}|} \left| \sum_{i \in \mathcal{A}} \left(J^i_T(\theta)
        - \bar{J}^i_T(\theta)\right) \right|}{A}+
        \uunderbrace{\frac{1}{|\mathcal{A}|}\left|\sum_{i \in \mathcal{A}}\left(
        \overline{J}^i_T(\theta)-
        \tilde{J}^i_T(\theta)
        \right)\right|}{B}
      \end{align*}

      Note that:
      \begin{displaymath}
        \mathbb{E}^{\mathbb{P}^i}\left[\bar{J}^i_T(\theta)\right] = J^i_T(\paramnot)
      \end{displaymath}

      Thus, we can apply Hoeffding's Lemma in the same way as
      Theorem \ref{theorem:hoeffding} to bound term A.

      For term B, we use the Simulation Lemma\footnote{The specific proposition
      (Proposition \ref{proposition:simulationlemma}) is recreated
      for exposition in Appendix \ref{app:proofs}}
      (see \cite{agarwal2019reinforcement}) to establish a bound.

      \begin{align*}
        \frac{1}{|\mathcal{A}|} \left| \sum_{i \in \mathcal{A}} \left(\overline{J}^i_T(\theta)-
        \tilde{J}^i_T(\theta)\right) \right|
                &\leq \frac{1}{|\mathcal{A}|} \sum_{i \in
                  \mathcal{A}} T(R^{\text{max}} - R^{\text{min}})
        \tv{\mathbb{P}^i_M,
         \hat{\mathbb{P}}^i_M}\\
                &\leq T(R^{\text{max}} - R^{\text{min}})\epsilon_{sup}
      \end{align*}

      Where the second inequality follows from Assumption \ref{assumption:forecast}.
      Combining these two terms gives the result.

    \end{proof}

   \hfill $\blacksquare$

   Note that the bound in Theorem \ref{theorem:censoredlearnability}
   closely mirrors the one in Theorem \ref{theorem:hoeffding} with an additional
   term that reflects the ``average'' error in the predictions of the imputation
   model.
   Thus, given an \textit{accurate forecast} of the unobserved
   random variables, the empirical (i.e. from historical data) value
   function $\tilde{J}_T(\paramnot)$ will be a good estimate of the
   true value function $J_T(\paramnot)$.

   \section{Baseline Policies}
   \label{sec:backtest}
   This section aims to describe the collection of baseline policies that we
   backtest existing RL algorithms and our proposed DirectBackprop against.

   \subsection{Oracle}
   \label{sec:oracle}
   We first benchmark against an oracle policy (\textbf{Oracle}), which
   represents an idealized scenario where the policy knows at time $t$ the
   future demand, price, cost and vendor lead-time of all products. This
   policy is not adapted to the data available at time $t$ and hence only
   serves as a benchmark to understand the behavior of the RL agents.
   Computing this policy involves solving the convex program given in
   Section \ref{sec:objective_function} \textit{ex-post}. This policy
   resembles the oracle defined in \cite{qi2020practical}.

   \subsection{Myopic Approximation} \label{sec:Nahmias}
   Following the approximate optimality results established by
   \cite{nahmias1979simple, morton1971}, we utilize a myopic
   approximation\footnote{We call this policy ``Myopic'' since it plans for only
   a single period at a time.} to the dynamic program. The optimal target
   inventory level is given by: \begin{equation} z^{*} =
   F^{-1}_{\mathbb{D}^i_t}\biggl(\frac{p^i_t - c^i_t}{p^i_t - c^i_t + c^i_t
   (1-\gamma)}\biggr), \end{equation} where $F^{-1}_{\mathbb{D}^i_t}$ denotes
   the inverse CDF (or quantile function) of demand for product $i$ at time $t$,
   and $z^{*}$ denotes the target inventory level. The order quantity then
   becomes $(z^{*} - \text{Supply}^i_t)^{+}$, where $\text{Supply}^i_t$ refers
   to the total inventory on hand and in flight.

   To generate $F^{-1}_{\mathbb{D}^i_t}$ we utilize a state-of-the-art quantile
forecaster \cite{mqcnn}, to produce 50th and 90th quantile forecasts for a fixed
planning horizon and then fit a gamma distribution to generate the
critical-ratio quantile.

   \subsection{Newsvendor} \label{sec:newsvendor}
   While \cite{nahmias1979simple, morton1971} produce effective myopic policies, they assume no continuation cost for the excess inventory. Following \cite{huh2009asymptotic, janakiraman2004lost} we consider a newsvendor problem, which optimizes over lead times and demand and a stationarity condition as the termination. This produces a classical OR baseline, where the target inventory 
   is given by:
   \begin{equation}
           z^{*, i}_t = \min_{y \geq 0} \rewardexp \biggl[ p^i_t \biggl(\sum^{t+v}_{s=t}\demnot^i_{s} - y\biggr)^{+} + (c^i_t + h) \times \biggl(y - \sum^{t+v}_{s=t}\demnot^i_{s}\biggr)^{+} \biggr] \label{eqn:newsvendor}
   \end{equation}
   where $h$ denotes a salvage value for the left over inventory. The order quantity then becomes $(z^{*, i}_t - \text{Supply}^i_t)^{+}$, where $\text{Supply}^i_t$ refers to the total inventory on hand and in flight.

   \subsection{Planning Horizon Normalized Policy} \label{sec:phn}

   Order up to policies implicitly assume that sales during the first vendor lead time period are fully backlogged. In order to mitigate this effect we improve the newsvendor base-stock policy in Section \ref{sec:newsvendor} by absorbing extra information about the vendor lead time (denoted by the random variable $\vltnot$) distribution.

  We normalize the target inventory level given by Equation \eqref{eqn:newsvendor} by the mean of the vendor lead time (for product $i$ at time $t$) to get the buying amount for each time step (review period), and multiply it by the median of the vendor lead times across all products in order to approximate the amount required on-hand to fulfill demand for a single lead time.

   Consider the target level set by the policy in Equation \eqref{eqn:newsvendor} above. Let the vendor lead time for the product $i$ at time $t+1$ be $v^i_{t+1}$, and assume $\rewardexp[v^i_{t+1} | \mathcal{F}_t] > 0$. The Planning Horizon Normalized (PHN) policy would order:
   \begin{equation}
       a(PHN)^i_t := \frac{z^{*, i}_t}{\rewardexp[v^i_{t+1} | \mathcal{F}_t]}.
   \end{equation}

   For our policies, we use $\text{Median}(\vltnot) * a(PHN)^i_t$ as a Planning-Horizon normalized baseline to the newsvendor.

    \section{Empirical Results}
    \label{sec:results}
    We evaluate the following baseline and RL policies:
    \begin{itemize}
        \item \textbf{Oracle} (Section \ref{sec:oracle}): the agent optimizes the order quantity by looking at the future values of different random variables.
        \item \textbf{Myopic} (Section \ref{sec:Nahmias}): the agent orders the amount given by the myopic policy from \cite{nahmias1979simple} over a fixed vendor lead time horizon.
        \item \textbf{Newsvendor} (Section \ref{sec:newsvendor}): the agent orders the amount given by the newsvendor policy over the expected planning horizon.
        \item \textbf{2.5 $\times$ PHN} (Section \ref{sec:phn}): the agent orders $2.5$ times the normalized amount given by the newsvendor problem. $2.5$ is the median value of all products' lead times.
        \item RL policies (Section \ref{sec:rl_policies} and Section \ref{sec:directbp}): we choose \textbf{ARS}, \textbf{SAC}, \textbf{A3C} and \textbf{DirectBackprop} to backtest. They are the top 4 performing algorithms across our experiments.
    \end{itemize}
    See Appendix \ref{app:network} for details on the network architectures used for each of the RL policies.

    \subsection{Data}
    \label{sec:data}

    Due to the lack of appropriate publicly available or baseline datasets, we use a weekly dataset of 80,000 sampled products from a single marketplace. Our dataset begins in April 2017 and ends in August 2019. We use the first 104 weeks for training, and evaluate out of sample on the last 19 weeks.

    Missing data points are filled with the previous time periods value, so as not to leak any information. We perform a number of preprocessing operations to the data. We standardize input features across all products for SAC and ARS (see Section \ref{sec:rl_policies}). For A3C, we standardize per product but across time. In addition, for A3C we also apply a Box-Cox transform to price, cost and demand in order to make long-tail distributions look more like a Gaussian distribution. Unlike the other RL algorithms, DirectBackprop did not require any feature preprocessing aside from the scaling factors. See Appendix \ref{app:featurization} for more details. The baseline algorithms also require no feature preprocessing.

    \subsection{Training}
    During the training phase, various RL agents are experimented with using RLlib \cite{liang2017rllib}. We also implemented a differentiable simulator using PyTorch \cite{NEURIPS2019_9015} which we used to train the {\bf DirectBackprop} policy. All algorithms were trained using an EC2 p3.16xlarge instance.

    \subsection{Results}
    \label{sec:results-sim}
    All three initialization cases laid out in Section \ref{sec:initterm} are evaluated. We used a terminal value (Equation \eqref{eqn:terminal_value}) of $0$.\footnote{we also investigated different termination options included in Equation \eqref{eqn:terminal_value} and saw no substantial performance difference.} Note that the testing data used is ``out of sample'' for the RL algorithms, in that all the data seen by the algorithms occurs in the chronological past. The ultimate goal of an RL agent is to maximize the expected discounted cumulative reward, which in this case, is approximated by Equation \eqref{eqn:approx_reward}. We summarize the performances in terms of cumulative reward at end of the testing period in Table \ref{table:performance} for a concrete comparison.

    \begin{table}
        \caption{Discounted cumulative reward normalized by the value
    of \textbf{Oracle} at end of the testing period under different
    initializations. ``On-Hand'' initialization means historic inventory
    positions were used and ``From NV'' indicates a newsvendor policy was
    rolled out up until the start of the test period, and then the
    evaluation policy rolled out thereafter.
    }
    \centerline{
        \begin{tabular}{l c c c}
            \toprule
                           & \multicolumn{3}{c}{Initialization}                             \\
            \cmidrule{2-4}
            Policy         & Zero                               & On-Hand     & From NV     \\
            \midrule
            Oracle         & 100.0                              & 100.0       & 100.0       \\
            Myopic         & 13.57                              & 52.05       & 56.48       \\
            Newsvendor     & 29.26                              & 61.62       & 65.48       \\
            2.5*PHN        & 47.39                              & 71.17       & 74.25       \\
            A3C            & 41.59                              & 67.80       & 71.20       \\
            SAC            & 41.13                              & 67.07       & 70.53       \\
            ARS            & 55.92                              & {\bf 76.09} & 78.06       \\
            DirectBackprop & {\bf 58.39}                        & 74.12       & {\bf 79.27} \\
            \bottomrule
        \end{tabular}}
        \label{table:performance}
    \end{table}

    As expected, the \textbf{Oracle} achieves the highest cumulative reward at end of the testing period. It outperforms other policies at all time points when the inventory is initialized with on-hand units or the amount given by the \textbf{Newsvendor} policy. \textbf{2.5 $\times$ PHN} is the second best baseline policy overall. This is primarily because it solves a critical issue with base-stock policies, that they order slightly extra by looking at the entire planning horizon as opposed to a period corresponding to a single vendor lead time. The two major base-stock policies \textbf{Newsvendor} and \textbf{Myopic} perform similarly - with \textbf{Newsvendor} showing a slight advantage. All RL policies are competitive with or superior to the baseline policies, and the performance is stable across different initializations. This verifies that RL indeed can be a good fit for our problem.

    Using the Newsvendor-determined inventory initial state, the best RL policy ({\bf DirectBackprop}) represents a 23\% improvement over the baseline policy (\textbf{Newsvendor}). As to how this backtest improvement translates into real world, one has to keep in mind that certain aspects of the reality are not simulated. Although the simulation plays back historical data, for vendor lead times we had to resort using samples from predictive distributions instead of actuals, vendor fill rates are not modeled (assumed to be 100\%), and there are no constraints to order quantities or order intervals. However, all policies operate under these same conditions.\footnote{It is important to caveat that systematic deviations of the Gym from reality may render relative improvements moot. For example, a policy that always buys more may perform better than another policy in the presence of fill rates - simply by virtue of holding more buffer inventory for weeks with low fill. This can be mitigated by constantly calibrating the Gym against reality.}

    \begin{figure}[htb!]
        \centerline{\includegraphics[width=0.7\linewidth]{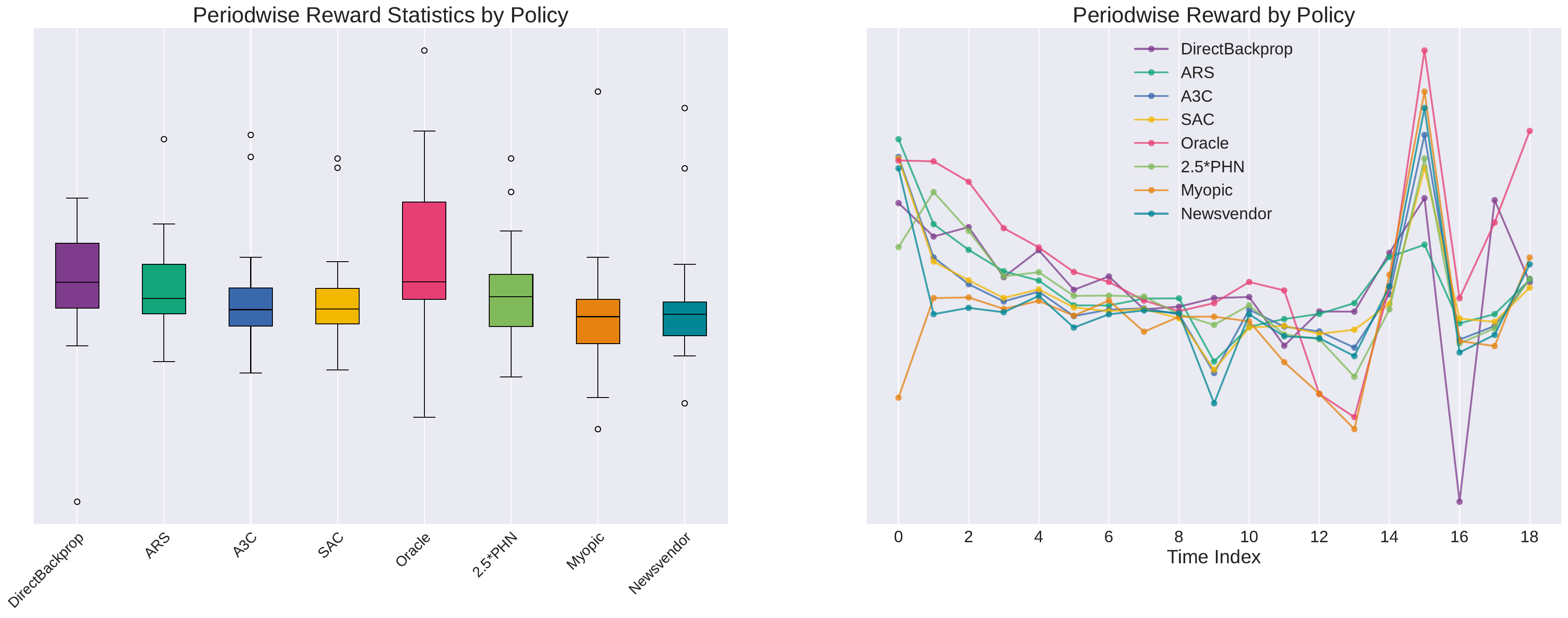}}
        \caption{Behavior of baseline and RL-policies during the test period, when the initial inventory position is given by the \textbf{Newsvendor} policy. The figure on the right depicts the period-wise reward (across products) for the different RL policies. The plot on the left depicts the period-wise reward statistics, where we see that DirectBackprop has a higher mean and median reward than all other non-oracle policies.}
        \label{fig:tip_initialized_graphs}
    \end{figure}

    \begin{figure}[htb!]
        \centering
        \includegraphics[width=0.7\linewidth]{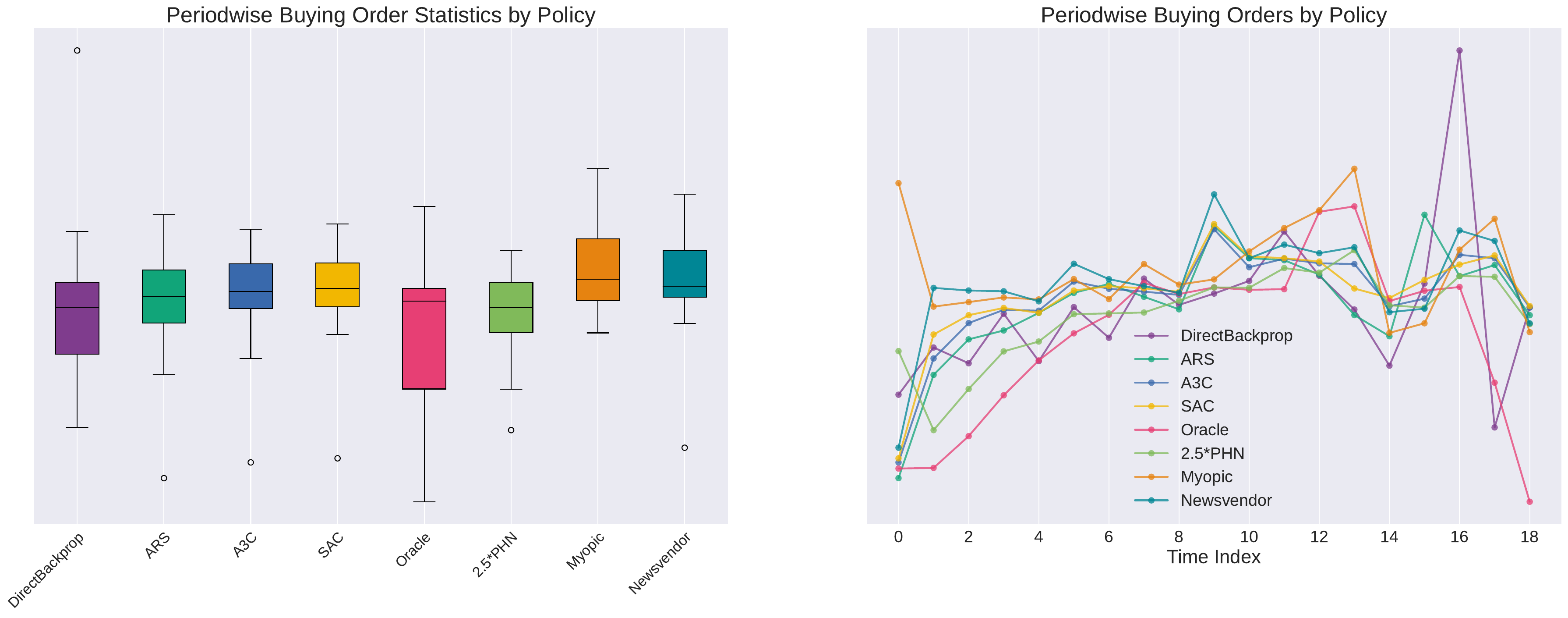}
        \caption{Behavior of baseline and RL-policies during the test period, when the initial inventory position is given by the \textbf{Newsvendor} policy. The figure on the right depicts the period-wise total orders (across products) for the different RL policies. The plot on the left depicts the period-wise total order statistics, where we see that DirectBackprop is able to order less while maintaining a higher reward than other non-oracle policies.}
	\label{fig:tip_initialized_graphs_orders}
\end{figure}

\begin{figure}[htb!]
	\centering
	\includegraphics[width=0.7\linewidth]{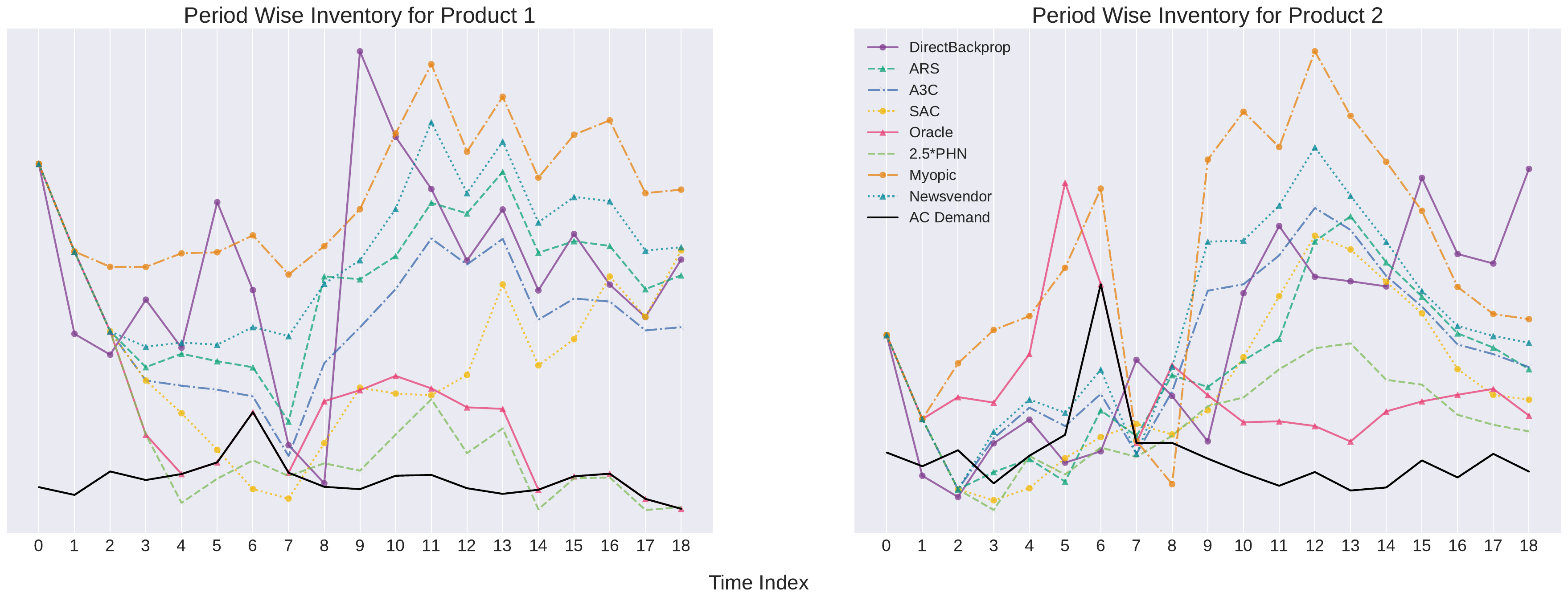}
	\caption{Inventory position for two random products at each time point \textit{after} supply arrives and \textit{before} demand is fulfilled. The solid black line is the availability corrected demand defined in Equation \eqref{eqn:ac_demand}; exploration is turned off for RL algorithms. The plot shows that the RL algorithms are able to generate reasonable inventory curves.}
	\label{fig:random_products}
\end{figure}

Figures \ref{fig:tip_initialized_graphs} and \ref{fig:tip_initialized_graphs_orders} show graphs of certain metrics over the test period. We see that DirectBackprop outperforms all non-oracle policies despite ordering less (and therefore holding less inventory).

Finally, the inventory level over time for two products is presented against the AC Demand (Equation \eqref{eqn:ac_demand}) in Figure \ref{fig:random_products}. We randomly chose these two products for the purpose of illustration. The inventory position is measured \textit{after} supply has arrived and \textit{before} demand has been fulfilled. The trends are consistent with the observations from previous figures.

\subsection{Real-World Deployment}
In this section we present results from a real-world deployment of the Direct-Backprop agent. The deployment was for 10K products over a period of 26 weeks. We randomized these products into a Treatment (receiving Direct-Backprop buy quantities) and a Control (receiving Newsvendor policy buy quantities) group. The Control group was the current production system used by one of the
largest Supply Chains in the world. The Treatment group was able to significantly reduce inventory (by $\sim12\%$) without losing any excess revenue (statistically insignificant difference from 0\%). We present the results in Table \ref{table:realworld}. Based on these results, we believe RL is a promising line of research for inventory control in the real world.

\begin{table*}[tb]
	\centering
	\begin{tabular}{c c}
		\toprule
		Metrics         & \% change                               \\
		\midrule
		Inventory Level & $\textbf{-12} \pm 6$                            \\
		Revenue         & $\sim$                              \\
		\bottomrule
	\end{tabular}

  \vspace{2mm}
  \caption{Inventory Level (\% change in Units) and Revenue (\% change in \$) for a randomized
real-world deployment of the Direct Backpropagation agent. The Revenue metric was statistically indistinguishable from $0$.}
\label{table:realworld}
\end{table*}

\section{Conclusion}

In this paper, we developed a novel approach for turning historical data into a simulator - showing that Deep Reinforcement Learning is a good candidate solution for solving historically intractable dynamic programming problems in inventory control. We also presented a novel inventory management system that handles a discrete-time periodic review inventory management system with correlated demand, price matching, random lead times and lost sales at each decision epoch. Finally, we proposed a model based reinforcement learning procedure (DirectBackprop), which leverages information we have available about the system dynamics and allows us to outperform both classical policies and modern model-free RL methods.

Further applications of model-based RL to the inventory management problem - such as learning a model for the transition dynamics or the incorporation of cross-product constraints (see Equations \eqref{eqn:arrivalconst}-\eqref{eqn:unitconst} in Appendix \ref{app:objective}) - remains an exciting avenue of research.

\clearpage
\bibliographystyle{abbrv}
\bibliography{RL}

\begin{thebibliography}{10}

\bibitem{abbeel2005exploration}
P.~Abbeel and A.~Y. Ng.
\newblock Exploration and apprenticeship learning in reinforcement learning.
\newblock In {\em Proceedings of the 22nd international conference on Machine
  learning}, pages 1--8, 2005.

\bibitem{agarwal2019reinforcement}
A.~Agarwal, N.~Jiang, S.~M. Kakade, and W.~Sun.
\newblock Reinforcement learning: Theory and algorithms.
\newblock {\em CS Dept., UW Seattle, Seattle, WA, USA, Tech. Rep}, pages 10--4,
  2019.

\bibitem{balaji2019orl}
B.~Balaji, J.~Bell-Masterson, E.~Bilgin, A.~Damianou, P.~M. Garcia, A.~Jain,
  R.~Luo, A.~Maggiar, B.~Narayanaswamy, and C.~Ye.
\newblock Orl: Reinforcement learning benchmarks for online stochastic
  optimization problems.
\newblock {\em arXiv preprint arXiv:1911.10641}, 2019.

\bibitem{bellman1955optimal}
R.~Bellman, I.~Glicksberg, and O.~Gross.
\newblock On the optimal inventory equation.
\newblock {\em Management Science}, 2(1):83--104, 1955.

\bibitem{besbes2013implications}
O.~Besbes and A.~Muharremoglu.
\newblock On implications of demand censoring in the newsvendor problem.
\newblock {\em Management Science}, 59(6):1407--1424, 2013.

\bibitem{bijvank2011lost}
M.~Bijvank and I.~F. Vis.
\newblock Lost-sales inventory theory: A review.
\newblock {\em European Journal of Operational Research}, 215(1):1--13, 2011.

\bibitem{brockman2016openai}
G.~Brockman, V.~Cheung, L.~Pettersson, J.~Schneider, J.~Schulman, J.~Tang, and
  W.~Zaremba.
\newblock Openai gym.
\newblock {\em arXiv preprint arXiv:1606.01540}, 2016.

\bibitem{cooper2006models}
W.~L. Cooper, T.~Homem-de Mello, and A.~J. Kleywegt.
\newblock Models of the spiral-down effect in revenue management.
\newblock {\em Operations research}, 54(5):968--987, 2006.

\bibitem{dada2007newsvendor}
M.~Dada, N.~C. Petruzzi, and L.~B. Schwarz.
\newblock A newsvendor’s procurement problem when suppliers are unreliable.
\newblock {\em Manufacturing \& Service Operations Management}, 9(1):9--32,
  2007.

\bibitem{rlsmart}
T.~K. Das, A.~Gosavi, S.~Mahadevan, and N.~Marchalleck.
\newblock Solving semi-markov decision problems using average reward
  reinforcement learning.
\newblock {\em Management Science}, 45(4):560--574, 1999.

\bibitem{efroni2022sample}
Y.~Efroni, D.~J. Foster, D.~Misra, A.~Krishnamurthy, and J.~Langford.
\newblock Sample-efficient reinforcement learning in the presence of exogenous
  information.
\newblock {\em arXiv preprint arXiv:2206.04282}, 2022.

\bibitem{efroni2022sparsity}
Y.~Efroni, S.~Kakade, A.~Krishnamurthy, and C.~Zhang.
\newblock Sparsity in partially controllable linear systems.
\newblock In {\em International Conference on Machine Learning}, pages
  5851--5860. PMLR, 2022.

\bibitem{eisenach2020}
C.~Eisenach, Y.~Patel, and D.~Madeka.
\newblock Mqtransformer: Multi-horizon forecasts with context dependent and
  feedback-aware attention.
\newblock {\em arXiv preprint arXiv:2009.14799}, 2020.

\bibitem{feng2006optimality}
Q.~Feng, H.~Yan, H.~Zhang, and S.~Sethi.
\newblock Optimality and nonoptimality of the base-stock policy in inventory
  problems with multiple delivery modes.
\newblock {\em Journal of Industrial and Management Optimization}, 2(1):19--42,
  2006.

\bibitem{giannoccaro2002inventory}
I.~Giannoccaro and P.~Pontrandolfo.
\newblock Inventory management in supply chains: a reinforcement learning
  approach.
\newblock {\em International Journal of Production Economics}, 78(2):153--161,
  2002.

\bibitem{gisjbrechts2021can}
J.~Gijsbrechts, R.~N. Boute, J.~A. Van~Mieghem, and D.~Zhang.
\newblock Can deep reinforcement learning improve inventory management?
  performance on dual sourcing, lost sales and multi-echelon problems.
\newblock {\em Manufacturing \& Service Operations Management}, 2021.

\bibitem{glasserman1995sensitivity}
P.~Glasserman and S.~Tayur.
\newblock Sensitivity analysis for base-stock levels in multiechelon
  production-inventory systems.
\newblock {\em Management Science}, 41(2):263--281, 1995.

\bibitem{gruen2002retail}
T.~W. Gruen, D.~S. Corsten, and S.~Bharadwaj.
\newblock Retail out of stocks: A worldwide examination of extent, causes, and
  consumer responses.
\newblock Technical report, Grocery Manufacturers of America, 2002.

\bibitem{haarnoja2018soft}
T.~Haarnoja, A.~Zhou, P.~Abbeel, and S.~Levine.
\newblock Soft actor-critic: Off-policy maximum entropy deep reinforcement
  learning with a stochastic actor, 2018.

\bibitem{hu2019chainqueen}
Y.~Hu, J.~Liu, A.~Spielberg, J.~B. Tenenbaum, W.~T. Freeman, J.~Wu, D.~Rus, and
  W.~Matusik.
\newblock Chainqueen: A real-time differentiable physical simulator for soft
  robotics.
\newblock In {\em 2019 International conference on robotics and automation
  (ICRA)}, pages 6265--6271. IEEE, 2019.

\bibitem{huh2009asymptotic}
W.~T. Huh, G.~Janakiraman, J.~A. Muckstadt, and P.~Rusmevichientong.
\newblock Asymptotic optimality of order-up-to policies in lost sales inventory
  systems.
\newblock {\em Management Science}, 55(3):404--420, 2009.

\bibitem{ingraham2018learning}
J.~Ingraham, A.~Riesselman, C.~Sander, and D.~Marks.
\newblock Learning protein structure with a differentiable simulator.
\newblock In {\em International Conference on Learning Representations}, 2018.

\bibitem{janakiraman2004lost}
G.~Janakiraman and R.~O. Roundy.
\newblock Lost-sales problems with stochastic lead times: Convexity results for
  base-stock policies.
\newblock {\em Operations Research}, 52(5):795--803, 2004.

\bibitem{janssen2013price}
M.~C. Janssen and A.~Parakhonyak.
\newblock Price matching guarantees and consumer search.
\newblock {\em International Journal of Industrial Organization}, 31(1):1--11,
  2013.

\bibitem{kaplan1970dynamic}
R.~S. Kaplan.
\newblock A dynamic inventory model with stochastic lead times.
\newblock {\em Management Science}, 16(7):491--507, 1970.

\bibitem{karlin1958inventory}
S.~Karlin.
\newblock Inventory models of the arrow-harris-marschak type with time lag.
\newblock {\em Studies in the mathematical theory of inventory and production},
  1958.

\bibitem{liang2017rllib}
E.~Liang, R.~Liaw, P.~Moritz, R.~Nishihara, R.~Fox, K.~Goldberg, J.~E.
  Gonzalez, M.~I. Jordan, and I.~Stoica.
\newblock Rllib: Abstractions for distributed reinforcement learning.
\newblock {\em arXiv preprint arXiv:1712.09381}, 2017.

\bibitem{madeka2018sample}
D.~Madeka, L.~Swiniarski, D.~Foster, L.~Razoumov, K.~Torkkola, and R.~Wen.
\newblock Sample path generation for probabilistic demand forecasting.
\newblock 2018.

\bibitem{maggiar2017joint}
A.~Maggiar and A.~Sadighian.
\newblock Joint inventory and revenue management with removal decisions.
\newblock Technical Report 3018984, SSRN, 2017.

\bibitem{ars}
H.~Mania, A.~Guy, and B.~Recht.
\newblock Simple random search provides a competitive approach to reinforcement
  learning.
\newblock {\em arXiv preprint arXiv:1803.07055}, 2018.

\bibitem{a3c}
V.~Mnih, A.~P. Badia, M.~Mirza, A.~Graves, T.~Lillicrap, T.~Harley, D.~Silver,
  and K.~Kavukcuoglu.
\newblock Asynchronous methods for deep reinforcement learning.
\newblock In {\em International conference on machine learning}, pages
  1928--1937, 2016.

\bibitem{moritz2018ray}
P.~Moritz, R.~Nishihara, S.~Wang, A.~Tumanov, R.~Liaw, E.~Liang, M.~Elibol,
  Z.~Yang, W.~Paul, M.~I. Jordan, et~al.
\newblock Ray: A distributed framework for emerging $\{$AI$\}$ applications.
\newblock In {\em 13th $\{$USENIX$\}$ Symposium on Operating Systems Design and
  Implementation ($\{$OSDI$\}$ 18)}, pages 561--577, 2018.

\bibitem{morton1971}
T.~E. Morton.
\newblock The near-myopic nature of the lagged-proportional-cost inventory
  problem with lost sales.
\newblock {\em Operations Research}, 19(7):1708--1716, 1971.

\bibitem{nahmias1979simple}
S.~Nahmias.
\newblock Simple approximations for a variety of dynamic leadtime lost-sales
  inventory models.
\newblock {\em Operations Research}, 27(5):904--924, 1979.

\bibitem{nikulin2011non}
M.~Nikulin, V.~Bagdonavi{\c{c}}ius, and K.~Julius.
\newblock Non-parametric tests for censored daata.
\newblock {\em HAL}, 2011, 2011.

\bibitem{oroojlooyjadid2016applying}
A.~Oroojlooyjadid, L.~Snyder, and M.~Tak{\'a}{\v{c}}.
\newblock Applying deep learning to the newsvendor problem.
\newblock {\em arXiv preprint arXiv:1607.02177}, 2016.

\bibitem{NEURIPS2019_9015}
A.~Paszke, S.~Gross, F.~Massa, A.~Lerer, J.~Bradbury, G.~Chanan, T.~Killeen,
  Z.~Lin, N.~Gimelshein, L.~Antiga, A.~Desmaison, A.~Kopf, E.~Yang, Z.~DeVito,
  M.~Raison, A.~Tejani, S.~Chilamkurthy, B.~Steiner, L.~Fang, J.~Bai, and
  S.~Chintala.
\newblock Pytorch: An imperative style, high-performance deep learning library.
\newblock In {\em NeurIPS}, 2019.

\bibitem{porteus2002foundations}
E.~L. Porteus.
\newblock {\em Foundations of stochastic inventory theory}.
\newblock Stanford University Press, 2002.

\bibitem{qi2020practical}
M.~Qi, Y.~Shi, Y.~Qi, C.~Ma, R.~Yuan, D.~Wu, and Z.-J.~M. Shen.
\newblock A practical end-to-end inventory management model with deep learning.
\newblock {\em Available at SSRN 3737780}, 2020.

\bibitem{MPC}
J.~B. Rawlings, D.~Q. Mayne, and M.~M. Diehl.
\newblock {\em Model Predictive Control: Theory, Computation, and Design}.
\newblock Nob Hill Publishing, 2nd edition, 2018.

\bibitem{reiman2004new}
M.~I. Reiman.
\newblock A new and simple policy for the continuous review lost sales
  inventory model.
\newblock Technical report, Bell Labs, 2004.

\bibitem{robbinsSGD}
H.~Robbins and S.~Monro.
\newblock A stochastic approximation method.
\newblock {\em The Annals of Mathematical Statistics}, 22, 1951.

\bibitem{scarf1959optimality}
H.~Scarf.
\newblock The optimality of (s,s) policies in the dynamic inventory problem.
\newblock Technical Report~11, Stanford University, 1959.

\bibitem{Schulman2016NutsBolts}
J.~Schulman.
\newblock Nuts and bolts of deep rl research.
\newblock NIPS Deep RL Workshop, 2016.

\bibitem{PPO}
J.~Schulman, F.~Wolski, P.~Dhariwal, A.~Radford, and O.~Klimov.
\newblock Proximal policy optimization algorithms.
\newblock {\em CoRR}, abs/1707.06347, 2017.

\bibitem{sinclair2022hindsight}
S.~R. Sinclair, F.~Frujeri, C.-A. Cheng, and A.~Swaminathan.
\newblock Hindsight learning for mdps with exogenous inputs.
\newblock {\em arXiv preprint arXiv:2207.06272}, 2022.

\bibitem{suh2022differentiable}
H.~J. Suh, M.~Simchowitz, K.~Zhang, and R.~Tedrake.
\newblock Do differentiable simulators give better policy gradients?
\newblock In {\em International Conference on Machine Learning}, pages
  20668--20696. PMLR, 2022.

\bibitem{sutton2018reinforcement}
R.~S. Sutton and A.~G. Barto.
\newblock {\em Reinforcement learning: An introduction}.
\newblock MIT press, 2018.

\bibitem{tripuraneni2021assessment}
N.~Tripuraneni, D.~Madeka, D.~Foster, D.~Perrault-Joncas, and M.~I. Jordan.
\newblock Assessment of treatment effect estimators for heavy-tailed data.
\newblock {\em arXiv preprint arXiv:2112.07602}, 2021.

\bibitem{verhoef2006out}
P.~C. Verhoef and L.~M. Sloot.
\newblock Out-of-stock: reactions, antecedents, management solutions, and a
  future perspective.
\newblock In {\em Retailing in the 21st Century}, pages 239--253. Springer,
  2006.

\bibitem{wen2019deep}
R.~Wen and K.~Torkkola.
\newblock Deep generative quantile-copula models for probabilistic forecasting.
\newblock {\em arXiv preprint arXiv:1907.10697}, 2019.

\bibitem{mqcnn}
R.~Wen, K.~Torkkola, B.~Narayanaswamy, and D.~Madeka.
\newblock A multi-horizon quantile recurrent forecaster.
\newblock {\em arXiv preprint arXiv:1711.11053}, 2017.

\bibitem{yang2022mqretnn}
S.~Yang, C.~Eisenach, and D.~Madeka.
\newblock Mqretnn: Multi-horizon time series forecasting with retrieval
  augmentation.
\newblock {\em arXiv preprint arXiv:2207.10517}, 2022.

\bibitem{Zeni2001censored}
R.~H. Zeni.
\newblock {\em Improved Forecast Accuracy in Airline Revenue Management by
  Unconstraining Demand Estimates from Censored Data}.
\newblock PhD thesis, Rutgers University, 2001.

\bibitem{zipkin2008old}
P.~Zipkin.
\newblock Old and new methods for lost-sales inventory systems.
\newblock {\em Operations Research}, 56(5):1256--1263, 2008.

\end{thebibliography}

\clearpage
\begin{APPENDICES}

	\section{Generalized Objective Function}\label{app:objective}

	\subsection{Objective Function}
	\label{sec:objective}
	Let $\fillnot^i_t$ denote the random variable for the fill-rate level for a product $i$ at time $t$. We adopt the methodology proposed by \cite{dada2007newsvendor} in that the fill-rate level is the number of units the vendor has available at the decision epoch. As a result, order quantities larger than the fill-rate level will be capped at the fill-rate level. The reward function at each decision epoch simply measures the cash flows, including the revenue from sales at each decision epoch, the cost of purchasing, and in-bounding inventory, following textbook approaches \cite{porteus2002foundations}:
	\begin{align}
		R^i_t & := \rewardrappendix \notag                                                   \\
		      & := \uunderbrace{p^i_t  \min (\demnot^i_t, \beforeinv)}{\text{Net Revenue}}
		- \longoverbrace{\costnot^i_t \min(\tilde{a}^i_t, \fillnot^i_t)}{\text{Cost of Goods Sold}}. \notag
	\end{align}
	We aim to maximize the total discounted reward, expressed as the following optimization problem
	\begin{align}
        \max_{\paramnot} & ~\rewardexp \biggl[\sum_{i \in \mathcal{A}} \sum^{\infty}_{t=0} \gamma^t  R^i_t(\paramnot)  \biggr] \label{eqn:objfun}                                                                    \\
		\text{subject to:}  \notag                                                                                                                                                                        \\
		                                           & I^i_0 = k^i \label{eqn:startinv}                                                                                                                     \\
		                                           & a^i_t = \policynot(H_t) \label{eqn:action}                                                                                                                \\
		                                           & \tilde{a}^i_t = \min(a^i_t, f^i_t) \label{eqn:rp_0}                                                                                                  \\
		                                           & I^i_{t+1} = \max \biggl[ I^i_t + \sum^{t}_{k=0} \tilde{a}^i_k  \mathbbm{1}_{\{\vltnot^i_k = (t-k)\}} - \demnot^i_t, 0\biggr] \label{eqn:evolution_0} 
	\end{align}
	subject to the following constraints.
	\begin{align}
		 & \sum_{i \in \mathcal{A}} \sum^{t}_{k=0} \tilde{a}^i_{t-k}  \mathbbm{1}_{\{\vltnot^i_k = (t-k)\}} \leq \text{Arrival Constraint}_t \label{eqn:arrivalconst} \\
		 & \sum_{i \in \mathcal{A}} I^i_t \text{Volume}^i_t \leq \text{Volume Constraint}_t \label{eqn:volconst}                                                      \\
		 & \sum_{i \in \mathcal{A}} \tilde{a}^i_t c^i_t \leq \text{Capital Constraint}_t \label{eqn:capconst}                                                         \\
		 & \sum_{i \in \mathcal{A}} \tilde{a}^i_t \leq \text{Unit Constraint}_t \label{eqn:unitconst}
	\end{align}
Our objective function (Equation \eqref{eqn:objfun}) and inventory
 balance equations (Equation \eqref{eqn:evolution_0}) come directly from the work of
 \cite{porteus2002foundations} and \cite{bellman1955optimal}, our fill
 rate (Equation \eqref{eqn:rp_0}) mimics
 \cite{dada2007newsvendor}. Our constraints are a standard
 formulation, with Equation \eqref{eqn:arrivalconst} being drawn from
 the work of \cite{kaplan1970dynamic}. Equations \eqref{eqn:volconst},
 \eqref{eqn:capconst}, and \eqref{eqn:unitconst} are standard volume,
 unit, and capital constraints.  In this full formulation, there are
 multiple constraints, equations
 \eqref{eqn:arrivalconst}-\eqref{eqn:unitconst}, that are expressed over the
 full set of products at each decision epoch $t$.

	\clearpage
	\section{Featurization}\label{app:featurization}

	The state at time $t$ for
	product $i$ now contains a superset of the features as used in
	\cite{mqcnn} and listed here:
	\begin{enumerate}
		\item The current inventory level $\invnot^i_{(t-1)}$
		\item Previous actions $a^i_u$ that have been taken $\forall u < t$
		\item Demand time series features
		      \begin{enumerate}
			      \item Historical availability corrected demand
			      \item Distance to public holidays
			      \item Historical website glance views data
		      \end{enumerate}
		\item Static product features
		      \begin{enumerate}
			      \item Product group
			      \item Text-based features from the product description
			      \item Brand
		      \end{enumerate}
		\item Economics of the product - (price, cost etc.)
	\end{enumerate}

	The state also contains lags of these variables. Finally, in addition to the total current on hand inventory, inflight inventory is part of the state as well.

	Denote by $\mu(x_1, \ldots, x_m)$ the empirical mean and $\sigma(x_1,\ldots,x_m)$ the empirical standard deviation of a collection of variables $(x_1,\ldots,x_m)$. Let $T$ denote the total number of time periods in the training portion of the data, and $N$ denote the total number of products that we are considering. We consider the following feature transformations:

	\begin{align}
		\text{Group Center}(x^i_t)     & := \frac{x^i_t - \mu(x^i_0, \ldots, x^i_T)}{\sigma(x^i_0,\ldots,x^i_T)} \label{eqn:grouptransform}                                           \\
		\text{Center}(x^i_t)           & := \frac{x^i_t - \mu(x^0_0, \ldots, x^0_T, \ldots, x^N_0, \ldots, x^N_T)}{\sigma(x^0_0, \ldots, x^0_T, \ldots, x^N_0, \ldots, x^N_T)} \label{eqn:center} \\
		\text{Box Cox}(x^i_t, \lambda) & := \frac{(x^i_t)^{\lambda}-1}{\lambda} \label{eqn:boxcox}
	\end{align}

	Equation \eqref{eqn:grouptransform} takes the mean and standard deviation of the value of a feature for a particular product across time and uses those to standardize the value of that feature for that product. Equation \eqref{eqn:center} takes the mean and standard deviation of a feature's value across products and across time and then uses those to standardize the value of that feature for that product. We only use values from the ``in-sample'' training period, as defined in Section \ref{sec:data}.

	We perform a number of preprocessing steps for each feature. For the Asynchronous Actor Critic algorithms, we use Equation \eqref{eqn:boxcox} on the price ($\lambda=1.068$), cost ($\lambda=0.013$) and demand ($\lambda=0.317$) features to make the input feature distributions close to normal. For the other feature inputs to the A3C algorithms, we apply a ``Group Center'' as defined in Equation \eqref{eqn:grouptransform}. For the Augmented Random Search and Soft-Actor Critic algorithms, we apply Equation \eqref{eqn:grouptransform} to all features.

	For ARS and SAC, features encoding calendar events and holidays  receive a distinct treatment. The representation is  a time-series for each holiday $h$
	\begin{equation}
		f^h_t = c_1 e^{-{c_2}/{d^h_t}},
	\end{equation}
	where $d^h_t$ is the distance in units of time to the closest holiday $h$ such that the distance is positive when $t<t_h$, and negative when $t_h<t$. $d^h_t$ thus flips its smallest negative value to the largest positive value at a halfway point between the two occurrences of the same holiday $h$. The exponential transform inverts this. Constants $c_1$ and $c_2$ are chosen so that $f^h_t \approx 0$ at this halfway point and $f^h_t = 1$ when $t=t_h$. The two constants flip signs when $t>t_h$ so that $f^h_{t_h+1} = -1$ and approaches zero  from below when $t$ approaches the following midpoint.

	For DirectBackprop, we apply {\bf no} transformations to the raw features as they were not necessary to obtain convergence. Our baseline algorithms also required no feature preprocessing.

	\clearpage
	\section{Policy Network Structures}\label{app:network}

	\begin{figure}[htb!]
		\centering
		\includegraphics[width=0.7\linewidth]{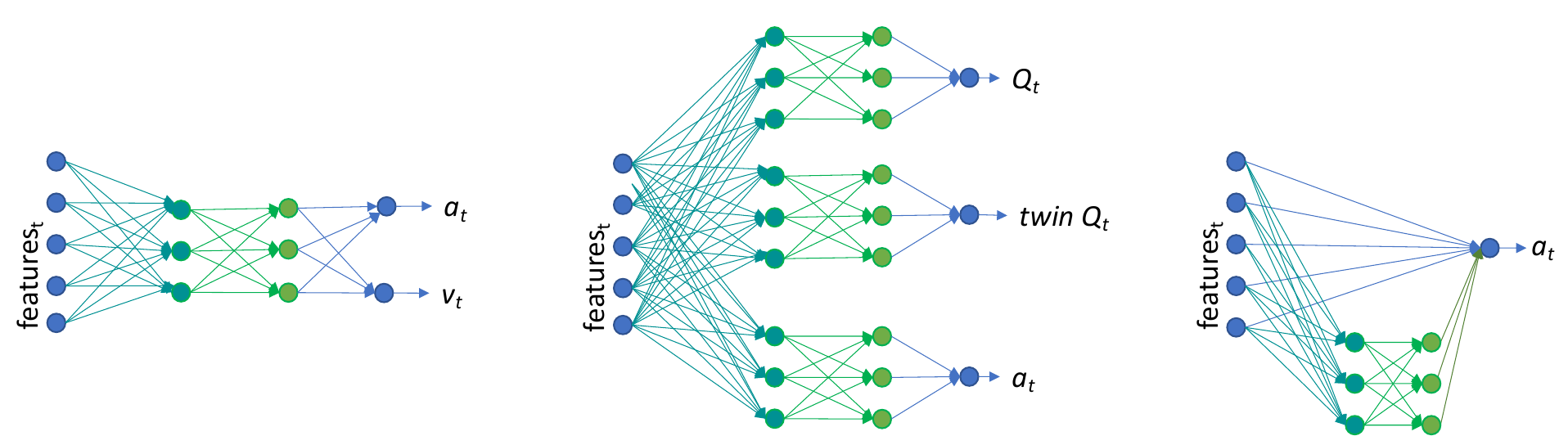}
		\caption{Policy network structures used with A3C, SAC and ARS respectively.}
		\label{fig:network_structures}
	\end{figure}

	Figure \ref{fig:network_structures} depicts the network structures used (from left to right) with the A3C, SAC and ARS algorithms respectively. The network structure used with A3C is an MLP with hidden sizes (128,128). We apply batch-normalization to each hidden layer. We use hyperbolic tangent activation functions at each hidden layer.

	The network structure used for SAC has a separate MLP for the value function and policy. Twin-Q is used to prevent overestimation. The hidden layer sizes for both the Q-function and the policy function are (256, 256). We use hyperbolic tangent activation functions at each hidden layer. The actions are taken from the policy network with a spherical Gaussian noise added.

	The network structure used with ARS has a skip connection and hidden layer sizes that were (500,50).

	\subsection{DirectBackprop}
	The DirectBackprop policy network consists of a Wavenet encoder with a MLP decoder \cite{mqcnn}. The encoder contains a stack of 5 dilated convolutions with rates [1, 2, 4, 8, 16]. Each convolution layer has 20 filters. The decoder MLP has two hidden layers of size 32 with ELU activation functions.

	We also ran experiments using the same network architecture as ARS and obtained similar results (although in that case, we applied feature transforms like those used for ARS).

	\clearpage
	\section{Algorithm Details}
	\label{sec:alg-details}

	\begin{algorithm}
		\caption{Direct Backpropagation for Inventory Control}
		\label{alg:dbp}
		\begin{algorithmic}
			\State \textbf{Input: } $\cD$ (a set of products), batch size $M$, initial policy parameters $\btheta_0$
			\State $b \gets 1$
			\While{not converged}
			\State Sample mini-batch of products $\cD_M$
			\State $J_b \gets 0$
			\For{$i \in \cD_M$}
			\State $J_{i}\gets 0$, $I_0^i \gets k^i$
			\For{$t = 0,\dots,T$}
            \State $a^i_{t} = \barepolicynot^i_{\paramnot_b, t}(H_{t})$
            \State $I^i_{t+1} \gets \mathcal{T}_t(H_t, \paramnot_b)$
            \State $J_{i}\gets J_{i} + \gamma^t R(H_t, \paramnot_b)$
			\EndFor
			\State $J_b \gets J_b + J_{i}$
			\EndFor
			\State $\btheta_{b} \gets \btheta_{b-1} + \alpha \nabla_{\btheta} J_b$
			\State $b \gets b+1$
			\EndWhile
		\end{algorithmic}
	\end{algorithm}

	Note that, for this algorithm - at time $t$ the policy only uses the values of the exogenous variables upto time $t$.

	\clearpage
	\section{Software Design}\label{app:software}
	\textbf{Environment}: The simulator built upon historical data uses the OpenAI Gym interface \cite{brockman2016openai}. Given a set of products, each episode corresponds to a single item's past data. An RL agent interacts with all items repeatedly during training. We register the gym environment as a custom environment.

	\textbf{Training}: Due to the large amount of data each instance of the environment has to store, we use the Plasma object store from Ray \cite{moritz2018ray} to efficiently transfer data across all processes. The objects are held in a shared memory so that they can be accessed by other RL workers without creating in-memory copies. Figure \ref{fig:infra} is a visual depiction of how a typical actor-critic agent works in our setting. Each rollout worker accesses the historical data from the plasma store, performs a rollout and then asynchronously returns values to the policy network.

	\begin{figure}[htb]
		\centering
		\begin{minipage}{0.45\textwidth}
			\centering
			\includegraphics[width=\linewidth]{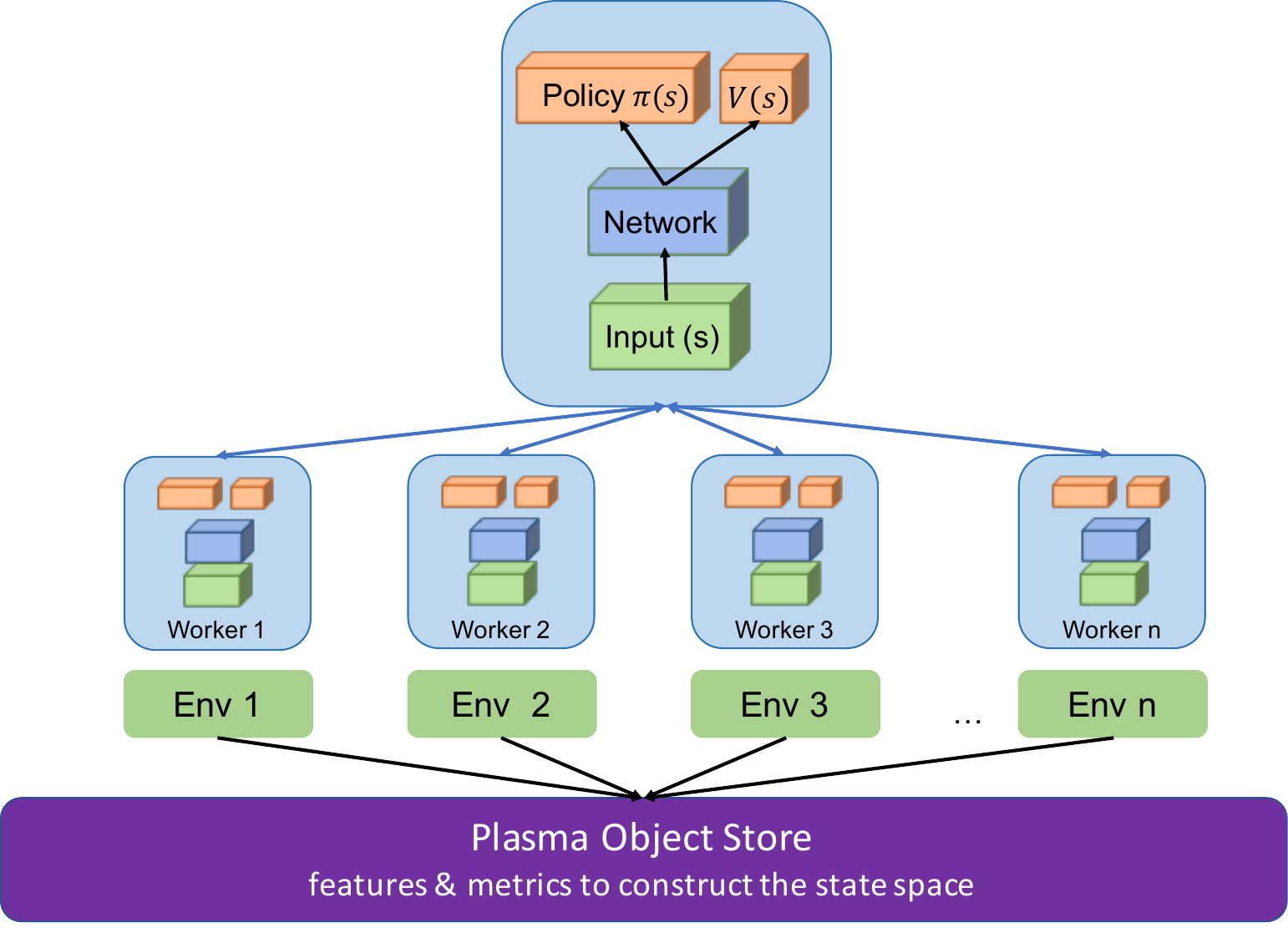}
			\caption{Visual depiction of the training architecture.}
			\label{fig:infra}
		\end{minipage} \hfill
		\begin{minipage}{0.5\textwidth}
			\centering
			\includegraphics[width=\linewidth]{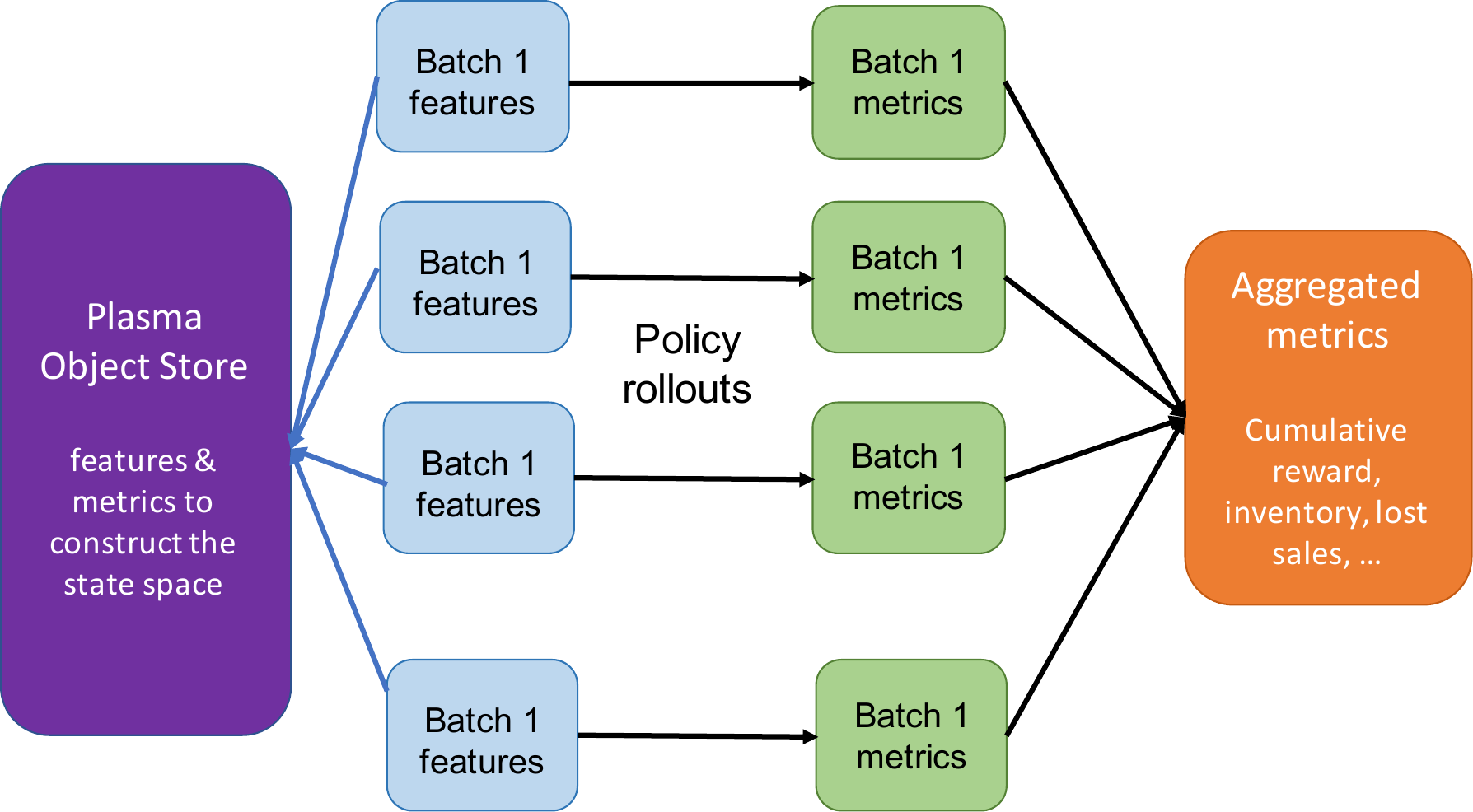}
			\caption{Visual depiction of the backtesting architecture.}
			\label{fig:backtesting_infra}
		\end{minipage}
	\end{figure}

	\textbf{Backtesting}: Unlike most RL applications (e.g., video games) where researchers mostly sample some observations for evaluation, backtesting needs to be performed for every single product in our case as it enables us to track both the overall profit and each individual product's behavior. Sequential evaluation thus becomes a bottleneck when the number of products scales up. To mitigate this issue, we implemented distributed backtesting using the remote actor interface of Ray. As shown in Figure \ref{fig:backtesting_infra}, each actor is responsible for backtesting a batch of products, and the results are later concatenated for aggregated metrics.

  \clearpage
  \section{Proofs}

\label{app:proofs}

  In this section, we recreate a component of the Simulation Lemma from \cite{abbeel2005exploration}.

  \begin{proposition}
    \label{proposition:simulationlemma}
  Let $\mathbb{P}, \hat{\mathbb{P}}$ be two probability distributions over a bounded domain $\mathcal{X}$, let $f$ be a bounded random variable over $\mathcal{X}$. Then:
  \begin{displaymath}
    |\mathbb{E}^{\mathbb{P}}[f] - \mathbb{E}^{ \hat{\mathbb{P}}}[f]| \leq (\sup_{x \in \mathcal{X}} f(x) - \inf_{x \in \mathcal{X}} f(x)) \tv{\mathbb{P}, \hat{\mathbb{P}}}
  \end{displaymath}
  \end{proposition}

  \begin{proof}~
{\bf Proof:}
    The proof is straightforward.  Define
\begin{displaymath}
g := \frac{f - \inf_{x \in {\cal
X}} f(x)}{\sup_{x \in {\cal X}} f(x) - \inf_{x \in {\cal X}} f(x)}
\end{displaymath}

For the case where $\sup_{x \in {\cal X}} f(x) - \inf_{x \in {\cal X }} f(x) = 0$, the proof follows immediately.

When $\sup_{x \in {\cal X}} f(x) - \inf_{x \in {\cal X }} f(x) > 0$, since $0 \leq g(x) \le 1$ we have:
\begin{displaymath}
\frac{    \left|\mathbb{E}^{\mathbb{P}}[f] - \mathbb{E}^{\hat{\mathbb{P}}}[f]\right| }{ \sup_{x \in {\cal X}} f(x) - \inf_{x \in {\cal X} f(x)} }
    =     \left|\mathbb{E}^{\mathbb{P}}[g] -
\mathbb{E}^{\hat{\mathbb{P}}}[g] \right|
\le \sup_{h: 0 \leq h \le 1}  \left|\mathbb{E}^{\mathbb{P}}[h] -
\mathbb{E}^{\hat{\mathbb{P}}}[h] \right|\equiv \tv{\mathbb{P}, \hat{\mathbb{P}}}
\end{displaymath}
\end{proof}
  \hfill $\blacksquare$

\end{APPENDICES}

\end{document}